\pdfoutput=1

\documentclass[conference]{IEEEtran}

\usepackage{mathptmx}
\usepackage{amsmath}
\usepackage{amsthm,amsmath,amssymb}
\usepackage{bm} 
\usepackage{amsfonts}
\usepackage{subeqnarray}
\usepackage{cases,cite}
\usepackage[utf8]{inputenc}
\usepackage{graphicx}
\usepackage{amssymb}
\usepackage{float}
\usepackage{graphicx} 
\usepackage{subfigure}
\usepackage{algpseudocode}
\usepackage{url}
\newtheorem{proposition}{Proposition}

\newtheorem{lemma}{Lemma}

\IEEEoverridecommandlockouts

\begin{document}

\title{Consensus-Based Transfer Linear Support Vector Machines for Decentralized Multi-Task Multi-Agent Learning}

\author{\IEEEauthorblockN{Rui Zhang}
\IEEEauthorblockA{Department of Electrical and Computer Engineering\\
New York University, Brooklyn, NY, 11201\\
Email: rz885@nyu.edu\\}
\and
\IEEEauthorblockN{Quanyan Zhu}
\IEEEauthorblockA{Department of Electrical and Computer Engineering\\
New York University, Brooklyn, NY, 11201\\
Email: qz494@nyu.edu}}

\maketitle

\begin{abstract}
Transfer learning has been developed to improve the performances of different but related tasks in machine learning. However, such processes become less efficient with the increase of the size of training data and the number of tasks. Moreover, privacy can be violated as some tasks may contain sensitive and private data, which are communicated between nodes and tasks. We propose a consensus-based distributed transfer learning framework, where several tasks aim to find the best linear support vector machine (SVM) classifiers in a distributed network. With alternating direction method of multipliers, tasks can achieve better classification accuracies more efficiently and privately, as each node and each task train with their own data, and only decision variables are transferred between different tasks and nodes. Numerical experiments on MNIST datasets show that the knowledge transferred from the source tasks can be used to decrease the risks of the target tasks that lack training data or have unbalanced training labels. We show that the risks of the target tasks in the nodes without the data of the source tasks can also be reduced using the information transferred from the nodes who contain the data of the source tasks. We also show that the target tasks can enter and leave in real-time without rerunning the whole algorithm. 
\end{abstract}
%\vspace{-1.5mm}
\begin{IEEEkeywords}
Transfer Learning, Multi-Task Learning, Distributed Learning, Support Vector Machines
\end{IEEEkeywords}
%\vspace{-3mm}
\section{Introduction}
%\vspace{-1mm}
Machine learning algorithms are largely used nowadays in various areas, e.g., face detection \cite{osuna1997training} and search engines \cite{mccallum1999machine}. Traditionally, machine learning makes predictions or classifications based on the assumption that the training and the testing data come from the same source or distribution \cite{shao2015transfer}. However, this assumption may not hold in many real applications\cite{pan2010survey}; for example, the training data can be outdated, or insufficient to build a good classifier. In such cases, it is difficult to find the classifier using traditional machine learning frameworks. 

Recent researches on transfer learning provide a solution to address such problems. It has been shown that machine learning tasks can benefit from other similar tasks by knowledge transfer \cite{shao2015transfer,pan2010survey}. For instance, web-page data can become outdated easily as the web content changes frequently, and new training data are expensive to acquire as the labeling of the data is costly. Since parts of the outdated data still contain useful information,  knowledge can be transferred from them to train a classifier together with the new data\cite{dai2007boosting}. 

Although the knowledge transfer can improve the performance of machine learning, the training process using a large amount of data is often not efficient. For traditional transfer learning, training data are communicated between tasks\cite{evgeniou2004regularized}. The direct data sharing is not possible when the volume of the data is huge and they contain private information. For example, training data may come from different nodes of a wireless sensor network (WSN), and their communication with a fusion center can be either costly or restricted due to scalability, privacy or power limitations \cite{forero2010consensus}. 

This paper aims to address this issue by extending transfer learning into a distributed framework in the context of support vector machines (SVMs) illustrated in Fig. \ref{fig:DTSVM}. The framework trains different but related tasks together with linear SVMs at each node in a fully distributed network. The decision variables to classify testing data are found by minimizing the regularized errors of training data of each task. One set of consensus constraints is introduced to force all the tasks to share the same terms of decision variables at each node while another set of consensus constraints is used to force all the nodes to share the same decision variables of each task. With alternating direction method of multipliers (ADMoM) \cite{boyd2011distributed}, the centralized problem can be solved in a fully distributed way. Each task at a node shares its decision variables with the same task in the neighboring nodes and other tasks in the same node. As a result, the classification accuracy of each task in each node can be improved without sharing local and private data.
%\vspace{-3mm}
\begin{figure}[]
\centering
\includegraphics[width=0.4\textwidth]{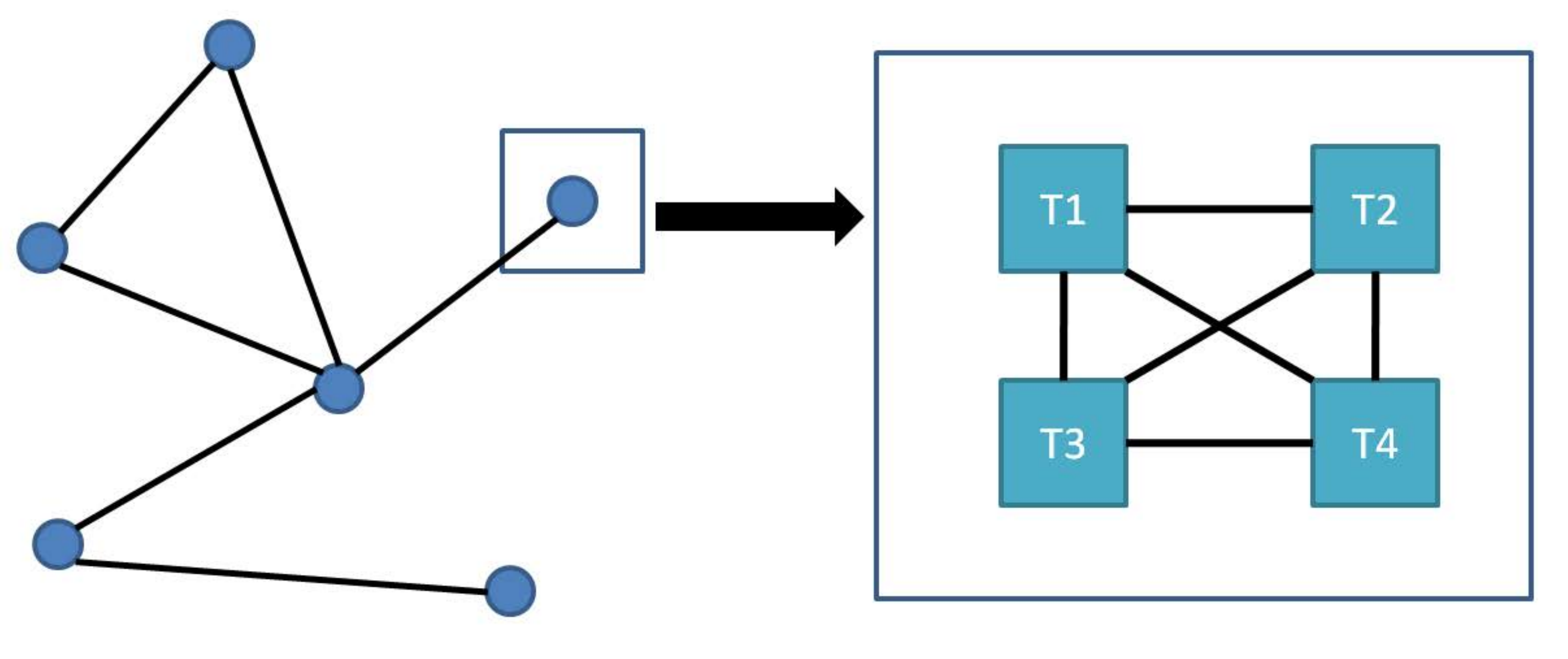}
\caption{Distributed transfer learning example. The left figure shows a network with $6$ nodes. The right figure shows that each node contains four tasks, which are trained  together in the network. }
\label{fig:DTSVM}
\end{figure}
%\vspace{-3mm}

The consensus-based distributed framework provides a way to address distributed transfer learning problems in connected networks. Since each task at a node makes decisions using its local data, the training process becomes more efficient and scalable. Allowing tasks and nodes to communicate their decision variables with others, we can achieve more accurate classifications without sharing private data between different tasks and different nodes, which effectively reduces the communication overhead and maintains privacy at the same time. Note that the problem of transfer learning between tasks in one node can be viewed as a transfer learning problem studied in \cite{evgeniou2004regularized}. Besides, the problem of distributed machine learning with a single task is a distributed support vector machines (DSVM) problem recently studied in \cite{forero2010consensus}. 

The proposed framework is a generalization of both centralized transfer learning scheme and distributed machine learning. It provides a large-scale transfer learning framework where each task transfers knowledge to other tasks and each node transfers knowledge to his neighboring nodes. Performances of all the tasks in each node
are illustrated in terms of their training efficiency and data privacy.

The rest of this paper is organized as follows. Section 2 presents a consensus-based centralized transfer learning approach on SVMs. Section 3 outlines the extended distributed transfer support vector machines (DTSVM). Section 4 and 5 present numerical results and concluding remarks, respectively.

{\noindent\bf Notations.} Boldface letters represent matrices (column vectors); $(\cdot)^T$ denotes matrix and vector transposition; $\parallel \cdot \parallel$ denotes the norm of the matrix or vector; $diag(\mathbf{y})$ denotes the diagonal matrix with $\mathbf{y}$ on its main diagonal; $\mathcal{V}$ denotes the set of nodes in a network; $\mathcal{B}_v$ denotes the set of neighboring nodes of node $v$; $\mathcal{T}$ denotes the set of tasks.
%\vspace{-3mm}
\section{Centralized Transfer Learning}
In this section, we present a centralized transfer learning approach on SVMs. Consider $T$ learning tasks with $\mathcal{T} =  \{ 1,...,T\}$ denotes the set of tasks. We assume that each task $t$ has a labeled training set $\mathcal{D}_t = \{ (\mathbf{x}_{tn},y_{tn})   | \mathbf{x}_{tn} \in \mathcal{X}_t, y_{tn} \in\{-1,+1\}  \}_{n=1}^{N_t}$, where $\mathcal{X}_t \subseteq \mathbb{R}^{p}$ represents the input space of task $t$. Note that $\mathcal{X}_t$ is different for each task, but has the same dimension $p$. For each task, a linear SVM aims to find a maximum-margin discriminant function $g_t(\mathbf{x}_t)=sign\left( \mathbf{x}_t^T \widehat{\mathbf{w}}_t^* +\widehat{b}_t ^*  \right)$, which gives input testing data $\mathbf{x}_t$ a label $-1$ or $+1$. Decision variables $\{\widehat{\mathbf{w}}_t^*,\widehat{b}_t^*\}$ can be found by solving the following minimization problem \cite{vapnik2013nature}:
\begin{equation}
\label{eq:SingleTaskSVM}
\begin{array}{c}
\min\limits_{\widehat{\mathbf{w}}_t,\widehat{b}_t, \{\xi_{tn} \}}  \frac{1}{2}\parallel \widehat{\mathbf{w}}_t \parallel_2^2 + C\sum\limits_{n=1}^{N_t} \xi_{tn} \\
\begin{array}{cc}
{\begin{array}{c}
\text{s.t.}\\ \ 
\end{array}}&{\begin{array}{c}
y_{tn}(\widehat{\mathbf{w}}_t^T\mathbf{x}_{tn}+\widehat{b}_t) \geq 1 -\xi_{tn};\\
\xi_{tn} \geq 0.
\end{array}}
\end{array}
\end{array}
\end{equation}
Note that, $\xi_{tn}$ is the slack variable, which accounts for non-separable case. Problem (\ref{eq:SingleTaskSVM}) is a traditional SVM problem for single task learning. With the assumption that different tasks are related to each other on the basis of similarity between distributions of samples $\mathcal{X}_t$ \cite{ben2003exploiting}, the decision variables $\widehat{\mathbf{w}}_t,\widehat{b}_t$ can be divided into: $\widehat{\mathbf{w}}_t = \mathbf{w}_{0} + \mathbf{w}_t;\widehat{b}_t = b_{0} + b_t$, where $\mathbf{w}_0$ and $b_0$ are common terms over all tasks, while $\mathbf{w}_t$ and $b_t$ are task specific terms \cite{evgeniou2004regularized,pan2010survey}. We further write the decision variables as:
\begin{equation}
\label{eq:DecisionVariablesTransfer}
\widehat{\mathbf{w}}_t = \mathbf{w}_{0t} + \mathbf{w}_t;\widehat{b}_t = b_{0t} + b_t,
\end{equation}
with $\mathbf{w}_{01} = ... = \mathbf{w}_{0T}$ and $b_{01} = ...= b_{0T}$ forcing all common terms to agree with each other among all tasks. Thus,
a consensus-based centralized approach of multi-task transfer learning can be formulated as the following problem: 
\begin{equation}
\label{eq:CentralizedTransfer}
\begin{array}{l}
\min\limits_{\{{\mathbf{w}}_{0t},{b}_{0t},{\mathbf{w}}_t,{b}_t, \{\xi_{tn} \}\}}  \frac{\epsilon_1}{2} \sum\limits_{t\in\mathcal{T}}\parallel  {\mathbf{w}}_{0t} \parallel_2^2 \\ \ \ \ \ \ \ \ \ \ \ \ \ \ \ \ \ \ \ + \frac{\epsilon_2}{2} \sum\limits_{t\in\mathcal{T}}\parallel {\mathbf{w}}_t\parallel_2^2  + TC \sum\limits_{t\in\mathcal{T}}\sum\limits_{n=1}^{N_t} \xi_{tn} \\\text{s.t.} \\
\begin{array}{ccc}
{\begin{array}{c}
y_{tn}(\widehat{\mathbf{w}}_{t}^T\mathbf{x}_{tn}+\widehat{b}_{t}) \geq 1 -\xi_{tn},\\
\xi_{tn} \geq 0,\\
\mathbf{w}_{0t} = \mathbf{w}_{0s},b_{0t}=b_{0s},
\end{array}}&{\begin{array}{c}
\forall t\in\mathcal{T};\\
\forall t\in\mathcal{T};\\
\forall t,s\in\mathcal{T},s\neq t.
\end{array}}&{\begin{array}{c}
(\ref{eq:CentralizedTransfer}a)\\
(\ref{eq:CentralizedTransfer}b)\\
(\ref{eq:CentralizedTransfer}c)
\end{array}}
\end{array}
\end{array}
\end{equation}
Note that, consensus constraints (\ref{eq:CentralizedTransfer}c) are used to restrict common terms. $\epsilon_1$ and $\epsilon_2$ are positive regularization parameters, which determine how much $\widehat{\mathbf{w}}_t$ differs in each task by controlling the size of $\mathbf{w}_{0t}$ and $\mathbf{w}_{t}$. When $\frac{\epsilon_1}{\epsilon_2}$ is large, $\mathbf{w}_{0t}$ tends to be equal to $0$, which makes all tasks unrelated. On the other hand, when $\frac{\epsilon_1}{\epsilon_2}$ is small, $\mathbf{w}_{t}$ tends to be equal to $0$, which makes all tasks find the same classifier. 

By solving Problem (\ref{eq:CentralizedTransfer}), we can find the decision variables $\widehat{\mathbf{w}}_t^*$ and $\widehat{b}_t^*$ simultaneously with information transferred through consensus constraints (\ref{eq:CentralizedTransfer}c) and common terms $\mathbf{w}_{0t}$ and $b_{0t}$. Problem (\ref{eq:CentralizedTransfer}) provides a centralized framework to transfer learning. In the following section, we further extend it to a distributed network.
%\vspace{-3mm} 
\section{Distributed Transfer Learning}
%\vspace{-3mm}
Consider a network with $\mathcal{V}= \{1,..,V \}$ representing the set of nodes. Node $v\in\mathcal{V}$ only communicates with his neighboring nodes $\mathcal{B}_v \subseteq \mathcal{V}$. Without loss of generality, we assume that any two nodes in this network are connected by a path, i.e., there is no isolated node in this network. At each node $v$, $T$ labeled training sets $\mathcal{D}_{vt} =\left\lbrace (\mathbf{x}_{vtn},y_{vtn}) | \mathbf{x}_{vtn} \in \mathcal{X}_t, y_{vtn} \in\{-1,+1\}\right\rbrace_{n = 1}^{N_{vt}}$ of size $N_{vt}$ are available for each task $t\in\mathcal{T}$ (e.g., see Fig. \ref{fig:DTSVM}). 

The maximum-margin linear discriminant function at every node $v\in\mathcal{V}$ for each task $t\in\mathcal{T}$ can be described as $g_{vt}(\mathbf{x}_t) = \mathbf{x}_t^T\widehat{\mathbf{w}}_{vt}^* + \widehat{b}_{vt}^*$, where decision variables $\widehat{\mathbf{w}}_{vt}^* = \mathbf{w}_{0vt}^* + \mathbf{w}_{vt}^*$ and $\widehat{b}_{vt}^* = b_{0vt}^*+b_{vt}^*$. Note that there are two sets of consensus constraints, $\mathbf{w}_{011}=...=\mathbf{w}_{01T}=...=\mathbf{w}_{0V1}=...=\mathbf{w}_{0VT}$ and $b_{011}=...=b_{01T}=...=b_{0V1}=...=b_{0VT}$ are used to force all common terms of decision variables to agree with each other among all the nodes and all the tasks, while $\mathbf{w}_{1t}=...=\mathbf{w}_{Vt}$ and $b_{1t}=...=b_{Vt}$ are used to forcing all decision variables $\{\widehat{\mathbf{w}}_{vt},\widehat{b}_{vt} \}_{v\in\mathcal{ V}} $ of task $t$ to agree with each other among all the nodes. This approach enables each task $t$ at each node $v$ to classify any new input $\mathbf{x}_t$ to one of the two classes $\{+1,-1 \}$ without communicating $\mathcal{D}_{vt}$ to other nodes $v' \neq v$. The discriminant function $g_{vt}(\mathbf{x}_t)$ can be obtained by solving the following optimization problem: 
%\vspace{-10mm}
\begin{equation}
\label{eq:DistributedTransfer}
\begin{array}{l}
\min\limits_{\{{\mathbf{w}}_{0vt},{b}_{0vt},{\mathbf{w}}_{vt},{b}_{vt}, \{\xi_{vtn} \}\}}  \frac{\epsilon_1}{2} \sum\limits_{v\in\mathcal{V}} \sum\limits_{t\in\mathcal{T}}\parallel  {\mathbf{w}}_{0vt} \parallel_2^2  \\ \ \ \ \ \ \ \ \ \ \ \ \ \ \ \  + \frac{\epsilon_2}{2} \sum\limits_{v\in\mathcal{V}} \sum\limits_{t\in\mathcal{T}}\parallel {\mathbf{w}}_{vt}\parallel_2^2  + VTC \sum\limits_{v\in\mathcal{V}}  \sum\limits_{t\in\mathcal{T}}\sum\limits_{n=1}^{N_t} \xi_{vtn} \\
\text{s.t.}\\
\begin{array}{ll}
{\begin{array}{c}
y_{vtn}(\widehat{\mathbf{w}}_{vt}^T\mathbf{x}_{vtn}+\widehat{b}_{vt}) \geq 1 -\xi_{vtn},\\
\xi_{vtn} \geq 0,\\
\mathbf{w}_{0vt} = \mathbf{w}_{0vs},b_{0vt}=b_{0vs},\\
{\mathbf{w}}_{0vt} = {\mathbf{w}}_{0ut},{b}_{0vt}={b}_{0ut},\\
{\mathbf{w}}_{vt} = {\mathbf{w}}_{ut},{b}_{vt}={b}_{ut},
\end{array}}&{\begin{array}{c}
\forall v\in\mathcal{V},t\in\mathcal{T};\\
\forall v\in\mathcal{V},t\in\mathcal{T};\\
\forall v\in\mathcal{V},t,s\in\mathcal{T},s\neq t;\\
\forall v\in\mathcal{V},t\in\mathcal{T},u\in\mathcal{B}_v;\\
\forall v\in\mathcal{V},t\in\mathcal{T},u\in\mathcal{B}_v.
\end{array}}
\end{array}
\end{array}
\end{equation}
In the above problem, the third and the fourth constraints impose the consensus on the common terms $\mathbf{w}_{0vt}$ and $b_{0vt}$ at every node $v$ for each task $t$, while the fourth and the fifth constraints impose the consensus on decision variables $\widehat{\mathbf{w}}_{vt}:=\mathbf{w}_{0vt}+\mathbf{w}_{vt}$ and $\widehat{b}_{vt}:=b_{0vt}+b_{vt}$ across neighboring nodes for each task $t$. 

To solve Problem (\ref{eq:DistributedTransfer}), we first define the vector of decision variables $\mathbf{r}_v:=[{\bf{w}}_{0vt}^T,b_{0vt},\mathbf{w}_{vt}^T,b_{vt}]^T$, the augmented matrix $\mathbf{X}_{vt}:=[(\mathbf{x}_{vt1},...,\mathbf{x}_{vtN_v})^T,\mathbf{1}_{vt}]$, the diagonal label matrix $\mathbf{Y}_{vt}:=diag([y_{vt1},...,y_{vtN_{vt}}])$, and the vector of slack variables $\mathbf{\xi}_{vt}:=[\xi_{vt1},...,\xi_{vtN_{vt}}]^T$. With these definitions, it follows readily that ${\bf{w}}_{0vt}=[\hat{\mathbf{I}},\mathbf{0}]\mathbf{r}_v$ and $\widehat{\mathbf{w}}_{vt}=[\hat{\mathbf{I}},\hat{\mathbf{I}}]\mathbf{r}_v$ where $[\hat{\mathbf{I}},\mathbf{0}]:=[\hat{\mathbf{I}}_{p+1},\mathbf{0}_{p+1}]$ and $[\hat{\mathbf{I}},\hat{\mathbf{I}}]:=[\hat{\mathbf{I}}_{p+1},\hat{\mathbf{I}}_{p+1}]$. $\hat{\mathbf{I}}_{p+1}$ is a $(p+1)\times(p+1)$ identity matrix with its $(p+1,p+1)$-st entry being $0$. Thus, Problem (\ref{eq:DistributedTransfer}) can be rewritten as

\begin{equation}
\label{eq:DistributedTransferMatrix}
\begin{array}{l}
\min\limits_{ \{  \mathbf{r}_{vt},\xi_{vt},\varphi_{vts},\omega_{vut} \}  } \frac{\epsilon_1}{2} \sum\limits_{v\in\mathcal{V}}\sum\limits_{t\in\mathcal{T}} \mathbf{r}_{vt}^T {\mathbf{M}_1}\mathbf{r}_{vt}  \\ \ \ \ \ \ \ \ \ \ \ \ \ \ \  +\frac{\epsilon_2}{2} \sum\limits_{v\in\mathcal{V}}\sum\limits_{t\in\mathcal{T}} \mathbf{r}_{vt}^T {\mathbf{M}_2} \mathbf{r}_{vt}  + VTC  \sum\limits_{v\in\mathcal{V}} \sum\limits_{t\in\mathcal{T}}\mathbf{\xi}_{vt} \\ \text{s.t.}\\
\begin{array}{cc}
{\begin{array}{c}
\mathbf{Y}_{vt}\mathbf{X}_{vt}[\mathbf{I,I}]\mathbf{r}_{vt} \succeq \mathbf{1}_{vt} - \mathbf{\xi}_{vt},\\
\mathbf{\xi}_{vt} \succeq \mathbf{0}_{vt},\\
\left[ \mathbf{I,0}\right] \mathbf{r}_{vt} = \varphi_{vts},\varphi_{vts}=\left[ \mathbf{I,0}\right]\mathbf{r}_{vs},\\
\mathbf{r}_{vt} = \omega_{vut},\omega_{vut} = \mathbf{r}_{ut},
\end{array}}&{\begin{array}{c}
v\in \mathcal{V},t\in\mathcal{T};\\
v\in \mathcal{V},t\in\mathcal{T};\\
v\in \mathcal{V},t,s\in\mathcal{T},s\neq t;\\
v\in \mathcal{V},t\in\mathcal{T},u\in\mathcal{B}_v,
\end{array} }
\end{array}
\end{array}
\end{equation}
where $\varphi_{vts}$ is used to decompose the common term $\left[ \mathbf{I,0}\right]\mathbf{r}_{vt}$ of task $t$ to other tasks $s \neq t$, and $\omega_{vut}$ is used to decompose the decision variable $\mathbf{r}_v$ at node $v$ to its neighboring nodes $u\in\mathcal{B}_v$. Note that $\left[ \mathbf{I,0}\right]: =\left[ \mathbf{I}_{p+1},\mathbf{0}_{p+1}\right] $, $\left[ \mathbf{I,I}\right]: =\left[ \mathbf{I}_{p+1},\mathbf{I}_{p+1}\right] $, $\mathbf{M}_1 = [ \hat{\mathbf{I}},\mathbf{0} ]^T[ \hat{\mathbf{I}},\mathbf{0}]$ and  $\mathbf{M}_2 = [ \mathbf{0},\hat{\mathbf{I}} ]^T[\mathbf{0} ,\hat{\mathbf{I}}]$. 

Problem (\ref{eq:DistributedTransferMatrix}) can be solved iteratively in a distributed way with ADMoM \cite{boyd2011distributed}, which is shown as the following proposition. 
\begin{proposition}
\label{Proposition1}
With $\alpha_{vt}^{(0)}=\mathbf{0}_{(p+1)\times 1}$ and $\beta_{vt}^{(0)}=\mathbf{0}_{(2p+2)\times 1}$, Problem (\ref{eq:DistributedTransferMatrix}) can be solved by the following iterations:
\begin{equation}
\label{eq:DistributedTransferSoli1}
\begin{array}{l}
\lambda_{vt}^{(k+1)} \in\arg\max\limits_{\mathbf{0}_{vt} \preceq \lambda_{vt} \preceq  VTC \mathbf{1}_{vt} }-\frac{1}{2}\lambda_{vt}^T\mathbf{Y}_{vt}\mathbf{X}_{vt}[\mathbf{I,I}]\mathbf{U}_{vt}^{-1}[\mathbf{I,I}]^T\mathbf{X}_{vt}^T\mathbf{Y}_{vt}\lambda_{vt} \\ \ \ \ \ \ \ \ \ \ \ \ \ \ \ \ \ + (\mathbf{1}_{vt}+\mathbf{Y}_{vt}\mathbf{X}_{vt}[\mathbf{I,I}]\mathbf{U}_{vt}^{-1}\mathbf{f}_{vt}^{(k)})^T\lambda_{vt},
\end{array}
\end{equation}
\begin{equation}
\label{eq:DistributedTransferSoli2}
\begin{array}{c}
\mathbf{r}_{vt}^{(k+1)}=\mathbf{U}_{vt}^{-1} \bigg( [\mathbf{I,I}]^T\mathbf{X}_{vt}^T\mathbf{Y}_{vt}\lambda_{vt}^{(k+1)} - \mathbf{f}_{vt}^{(k)} \bigg),
\end{array}
\end{equation}
\vspace*{0.005cm} 
\begin{equation}
\label{eq:DistributedTransferSoli3}
\alpha_{vt}^{(k+1)} = \alpha_{vt}^{(k)} + \frac{\eta_1}{2}[\mathbf{I,0}]\sum\limits_{s\in\mathcal{T},s\neq t}(\mathbf{r}_{vt}^{(k+1)}-\mathbf{r}_{vs}^{(k+1)}),
\end{equation}
\begin{equation}
\label{eq:DistributedTransferSoli4}
\beta_{vt}^{(k+1)} = \beta_{vt}^{(k)} + \frac{\eta_2}{2}\sum\limits_{u\in\mathcal{B}_v}(\mathbf{r}_{vt}^{(k+1)}-\mathbf{r}_{ut}^{(k+1)}),
\end{equation}
where
\begin{equation}
\label{eq:DistributedTransferU}
\mathbf{U}_{vt} = \epsilon_1\mathbf{M}_1 + \epsilon_2 \mathbf{M}_2 + 2\eta_1(T-1)[\mathbf{I,0}]^T[\mathbf{I,0}]+2\eta_2|\mathcal{B}_v|\mathbf{I}_{2p+2}, 
 \end{equation}
and 
\begin{equation}
\label{eq:DistributedTransferf}
\begin{array}{l}
\mathbf{f}_{vt}^{(k)} =  2[\mathbf{I,0}]^T\alpha_{vt}^{(k)}  + 2\beta_{vt}^{(k)}\\ \ \ \ \ \ -\eta_1 \sum\limits_{s\in\mathcal{T},s\neq t}[\mathbf{I,0}]^T[\mathbf{I,0}](\mathbf{r}_{vt}^{(k)}+\mathbf{r}_{vs}^{(k)}) - \eta_2 \sum\limits_{u\in\mathcal{B}_v} (\mathbf{r}_{vt}^{(k)}+\mathbf{r}_{ut}^{(k)}).
\end{array}
\end{equation}
\end{proposition}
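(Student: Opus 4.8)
The plan is to apply ADMoM to Problem~(\ref{eq:DistributedTransferMatrix}) with the splitting in which the ``local'' block collects the per-node, per-task variables $\{\mathbf{r}_{vt},\mathbf{\xi}_{vt}\}$ and the ``bridge'' block collects the auxiliary variables $\{\varphi_{vts},\omega_{vut}\}$ that couple tasks inside a node and a task across neighboring nodes. First I would attach a Lagrange multiplier to each of the four families of equality constraints of~(\ref{eq:DistributedTransferMatrix}) and form the augmented Lagrangian, using penalty parameter $\eta_1$ for the task-consensus constraints $[\mathbf{I,0}]\mathbf{r}_{vt}=\varphi_{vts}$, $\varphi_{vts}=[\mathbf{I,0}]\mathbf{r}_{vs}$ and penalty parameter $\eta_2$ for the node-consensus constraints $\mathbf{r}_{vt}=\omega_{vut}$, $\omega_{vut}=\mathbf{r}_{ut}$. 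The three ADMoM updates are then: (i) minimize over $\{\mathbf{r}_{vt},\mathbf{\xi}_{vt}\}$ with the bridge variables and multipliers fixed; (ii) minimize over $\{\varphi_{vts},\omega_{vut}\}$; (iii) dual ascent on all multipliers with step sizes $\eta_1$ and $\eta_2$.

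The key reduction --- essentially the argument of \cite{forero2010consensus} carried through the two-layer consensus structure --- is to dispatch step (ii) in closed form. Since the $\varphi_{vts}$- and $\omega_{vut}$-subproblems are unconstrained quadratics, their minimizers are the midpoints $\varphi_{vts}^{(k+1)}=\tfrac12[\mathbf{I,0}](\mathbf{r}_{vt}^{(k+1)}+\mathbf{r}_{vs}^{(k+1)})$ and $\omega_{vut}^{(k+1)}=\tfrac12(\mathbf{r}_{vt}^{(k+1)}+\mathbf{r}_{ut}^{(k+1)})$. Substituting these back, and using the zero initialization $\alpha_{vt}^{(0)}=\mathbf{0}$, $\beta_{vt}^{(0)}=\mathbf{0}$, one shows by induction that within each constraint pair the two multipliers remain equal up to sign, so all of them can be folded into the two accumulated quantities $\alpha_{vt}$ (the sum over $s\neq t$ of the task-consensus multipliers) and $\beta_{vt}$ (the sum over $u\in\mathcal{B}_v$ of the node-consensus multipliers); their recursions then collapse to exactly (\ref{eq:DistributedTransferSoli3})--(\ref{eq:DistributedTransferSoli4}). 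After this substitution the $\mathbf{r}_{vt}$-subproblem becomes: minimize $\tfrac{\epsilon_1}{2}\mathbf{r}_{vt}^T\mathbf{M}_1\mathbf{r}_{vt}+\tfrac{\epsilon_2}{2}\mathbf{r}_{vt}^T\mathbf{M}_2\mathbf{r}_{vt}+VTC\,\mathbf{1}_{vt}^T\mathbf{\xi}_{vt}+\tfrac12\mathbf{r}_{vt}^T\big(2\eta_1(T-1)[\mathbf{I,0}]^T[\mathbf{I,0}]+2\eta_2|\mathcal{B}_v|\mathbf{I}_{2p+2}\big)\mathbf{r}_{vt}+(\mathbf{f}_{vt}^{(k)})^T\mathbf{r}_{vt}$ subject to $\mathbf{Y}_{vt}\mathbf{X}_{vt}[\mathbf{I,I}]\mathbf{r}_{vt}\succeq\mathbf{1}_{vt}-\mathbf{\xi}_{vt}$, $\mathbf{\xi}_{vt}\succeq\mathbf{0}_{vt}$: collecting the quadratic-in-$\mathbf{r}_{vt}$ terms yields $\mathbf{U}_{vt}$ as in~(\ref{eq:DistributedTransferU}), and collecting the linear-in-$\mathbf{r}_{vt}$ terms coming from the multipliers and from the penalty terms evaluated at the previous iterate yields $\mathbf{f}_{vt}^{(k)}$ as in~(\ref{eq:DistributedTransferf}).

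It then remains to solve this convex QP in $\{\mathbf{r}_{vt},\mathbf{\xi}_{vt}\}$ by duality, exactly as for an ordinary soft-margin SVM. Introducing $\lambda_{vt}\succeq\mathbf{0}_{vt}$ for the margin constraints, stationarity in $\mathbf{\xi}_{vt}$ forces the box $\mathbf{0}_{vt}\preceq\lambda_{vt}\preceq VTC\,\mathbf{1}_{vt}$ and eliminates $\mathbf{\xi}_{vt}$, while stationarity in $\mathbf{r}_{vt}$ --- an unconstrained strongly convex quadratic once $\mathbf{U}_{vt}\succ 0$ --- gives the closed form~(\ref{eq:DistributedTransferSoli2}); plugging this back into the Lagrangian produces the box-constrained dual~(\ref{eq:DistributedTransferSoli1}). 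The remaining loose end is $\mathbf{U}_{vt}\succ 0$: $\mathbf{M}_1+\mathbf{M}_2$ is block-diagonal with each $(p{+}1)\times(p{+}1)$ block equal to $\hat{\mathbf{I}}_{p+1}$, hence only positive semidefinite, but the term $2\eta_2|\mathcal{B}_v|\mathbf{I}_{2p+2}$ is positive definite because the network is connected and so $|\mathcal{B}_v|\ge 1$; therefore $\mathbf{U}_{vt}\succ 0$ and every inverse appearing above is well defined.

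I expect the main obstacle to be the bookkeeping of the second paragraph: carrying the two families of penalty terms through the substitution of the midpoint auxiliary variables, consolidating the many Lagrange multipliers into $\alpha_{vt}$ and $\beta_{vt}$ while reconciling the factor $\tfrac12$ in~(\ref{eq:DistributedTransferSoli3})--(\ref{eq:DistributedTransferSoli4}) with the factor $2$ in~(\ref{eq:DistributedTransferU})--(\ref{eq:DistributedTransferf}), and verifying that the cross terms $\mathbf{r}_{vt}^{(k)}+\mathbf{r}_{vs}^{(k)}$ and $\mathbf{r}_{vt}^{(k)}+\mathbf{r}_{ut}^{(k)}$ enter $\mathbf{f}_{vt}^{(k)}$ with the right signs and the projector $[\mathbf{I,0}]^T[\mathbf{I,0}]$ in the right places; once that is pinned down, the rest is the textbook SVM dualization.
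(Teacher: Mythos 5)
Your proposal follows essentially the same route as the paper's Appendix A: the same two-block ADMoM splitting, the same closed-form midpoint elimination of $\varphi_{vts}$ and $\omega_{vut}$, the same consolidation of the paired multipliers into $\alpha_{vt}$ and $\beta_{vt}$ under zero initialization (in fact the two multipliers in each pair become exactly equal, not merely equal up to sign), and the same KKT dualization of the resulting QP to obtain (\ref{eq:DistributedTransferSoli1})--(\ref{eq:DistributedTransferSoli2}). The only substantive addition is your explicit argument that $\mathbf{U}_{vt}\succ 0$ via $|\mathcal{B}_v|\ge 1$, which the paper asserts without proof; the only omission is the concluding convergence remark, which the paper handles by invoking the standard saddle-point condition from \cite{boyd2011distributed}.
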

\begin{proof}
See Appendix A.
\end{proof}
In Proposition 1, each task at node $v$ computes $\lambda_{vt}$ by (\ref{eq:DistributedTransferSoli1}), then it computes $\mathbf{r}_{vt}$ by (\ref{eq:DistributedTransferSoli2}) using the new $\lambda_{vt}$. In the next step, each task at node $v$ sends $\mathbf{r}_{vt}$ to all the other tasks $s\in\mathcal{T},s\neq t$, and each node of task $t$ broadcasts $\mathbf{r}_{vt}$ to the neighboring nodes $u\in\mathcal{B}_v$. Then, $\alpha_{vt}$ updates by (\ref{eq:DistributedTransferSoli3}) with $\mathbf{r}_{vs}$ from the other tasks $s\in\mathcal{T},s\neq t$, while $\beta_{vt}$ updates by (\ref{eq:DistributedTransferSoli4}) with $\mathbf{r}_{ut}$ from neighboring nodes $u\in\mathcal{B}_v$. Then, each task at node $v$ repeats computing $\lambda_{vt}$ by (\ref{eq:DistributedTransferSoli1}) with $\alpha_{vt}$ and $\beta_{vt}$, and the iteration goes until convergence. Note that, at each iteration $k$, each task at each node can evaluate its own discriminant function for any input data $\mathbf{x}_t$ as:
\begin{equation}
\label{eq:DistributedTransferDiscriminantFunction}
g_{vt}(\mathbf{x}_t)= [\mathbf{x}_t^T,1][\mathbf{I},\mathbf{I}]\mathbf{r}_{vt}.
\end{equation}
Proposition 1 illustrates the iterations of distributed transfer support vector machines (DTSVM). It is a fully distributed algorithm which does not require a fusion center to store or process all the data. Each iteration requires calculating $\lambda_{vt}$, $\mathbf{r}_{vt}$, $\alpha_{vt}$ and $\beta_{vt}$. The computation of $\lambda_{vt}$ is quadratic programming that can be solved in polynomial time.  $\mathbf{r}_{vt}$, $\alpha_{vt}$ and $\beta_{vt}$ can be calculated directly. It can be easily shown that the inverse of $\mathbf{U}_{vt}$ always exists. The information transferred between nodes is the decision variables $\mathbf{r}_{vt}$. This scheme maintains the privacy of sensitive data and reduces the communication overhead at the same time since the data is kept at each node. Our DTSVM algorithm also has no assumptions on the form of data and networks, and thus, it can be used in various situations. Moreover, since decision variables $\mathbf{r}_{vt}$ are updated at each iteration, adding or deleting nodes and modifying connections do not require rerunning of the whole algorithm. In addition, the proof of the convergence of the iterations to the solution of Problem (\ref{eq:DistributedTransferMatrix}) is provided at the end of Appendix A.
\section{Numerical Experiments}
%\vspace{-1mm}
In this section, we present numerical experiments of DTSVM. We use the MNIST database of handwritten digits to evaluate the distributed transfer learning algorithm \cite{MNIST}. The MNIST database contains images of digit ``$0$" to ``$9$", here we set classifying ``$3$" and ``$6$" as Task 1, classifying ``$5$" and ``$4$" as Task 2 and classifying ``$8$" and ``$9$" as Task 3. Note that, Task 1 and 2 are the target tasks that we aim to decrease their classification risks, while Task 3 is the source task that helps us to achieve that. All the images have been pre-processed with principal component analysis (PCA) into vectors with a dimension of $10$ \cite{jolliffe2002principal}. We further define the degree of a node $v\in\mathcal{V}$ as the actual number of neighboring nodes $\mathcal{B}_v$ divided by the most achievable number of neighbors $|\mathcal{V}|-1$, and the degree of the network $\mathcal{V}$ as the average degree of all the nodes $v\in\mathcal{V}$. 

For comparison purposes, we also present the results of centralized support vector machines (CSVM) and distributed support vector machines (DSVM). The algorithm of CSVM can be acquired from \cite{vapnik2013nature}. The algorithm of DSVM can be found in \cite{forero2010consensus}, which only shares the values of decision variables during the training process. We will show later that the information from the nodes with DTSVM can also improve the performance of the nodes with DSVM. 

From Fig. \ref{fig:Ex0}, we can see that the classification risks of DTSVM are lower than the risks of both DSVM and CSVM, thus, transfer learning improves the performances of the tasks. Moreover, we can see that Task 1 benefits more than Task 3 as the risks of Task 1 in DTSVM decrease more. 

From Fig. \ref{fig:Ex1} and Fig. \ref{fig:Ex1C}, we can see that parameters $C$, $\varepsilon_1$ and $\varepsilon_2$ are related to the performance of transfer learning. $C$ indicates the trade-off between a larger margin and a smaller error penalty. Parameters $\varepsilon_1$ and $\varepsilon_2$ control the difference of decision variables between different tasks. When $\varepsilon_1/\varepsilon_2$ is large, $\mathbf{w}_{0t}$ tends to be $0$, i.e., all tasks tend to be not related, however, when $\varepsilon_1/\varepsilon_2$ is small, $\mathbf{w}_{t}$ tends to be $0$, i.e., all tasks tend to be same, both of the cases will decrease the classification accuracy. We can see from Fig. \ref{fig:Ex1} and Fig. \ref{fig:Ex1C} that the improvement of the performance requires a proper tuning of these parameters. 

Fig. \ref{fig:Ex3} shows the results when the training data of the target task, i.e., Task 1, is limited and has unbalanced labels. We can see that transfer learning can also improve the classification accuracy of these cases. Note that when there are only $2$ training samples of digit ``$3$" in Task 1, some nodes have only training samples of digit ``$6$", but the DTSVM can still find classifiers better than CSVM. 
%\vspace{-3mm}
\begin{figure}[]
\centering
\subfigure{\includegraphics[width=0.23\textwidth]{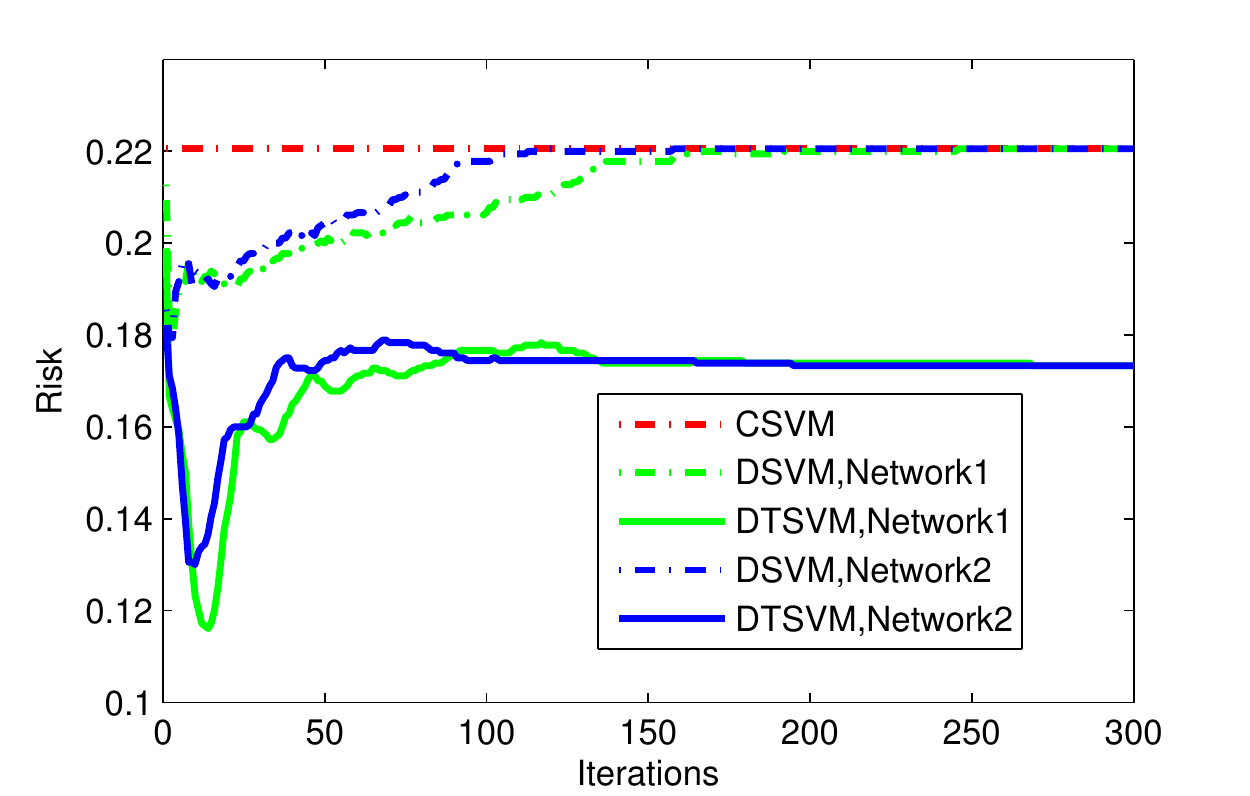}}
\subfigure{\includegraphics[width=0.23\textwidth]{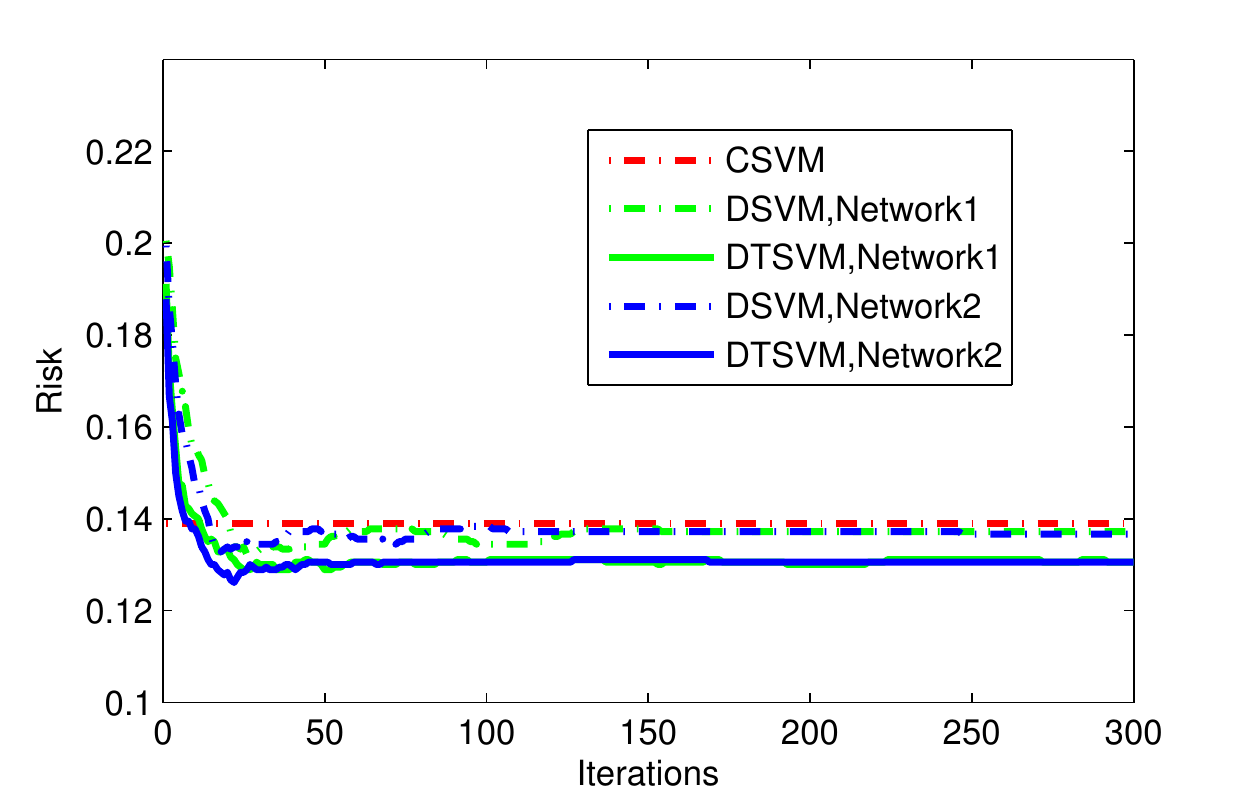}}
\vspace{-3mm}
\caption{Evolution of the global risks of DTSVM and DSVM \cite{forero2010consensus} training Task 1 and Task 3. The left figure and the right figure show the results of Task 1 and Task 3. Both Task 1 and Task 3 have $1800$ testing samples, and Task 3 has $800$ training samples, but Task 1 only has $200$ training samples. Note that both tasks are balanced. Network 1 has $20$ nodes with a degree of $0.6368$, while Network 2 has $10$ nodes with a degree of $0.8889$. Note that $C = 0.01$, $\epsilon_1 = 1$, $\epsilon_2 = 1$, $\eta_1 = 1$ and $\eta_2 = 1$. }
\label{fig:Ex0}
\end{figure}
%\vspace{-8mm}
\begin{figure}[]
\centering
\subfigure{\includegraphics[width=0.23\textwidth]{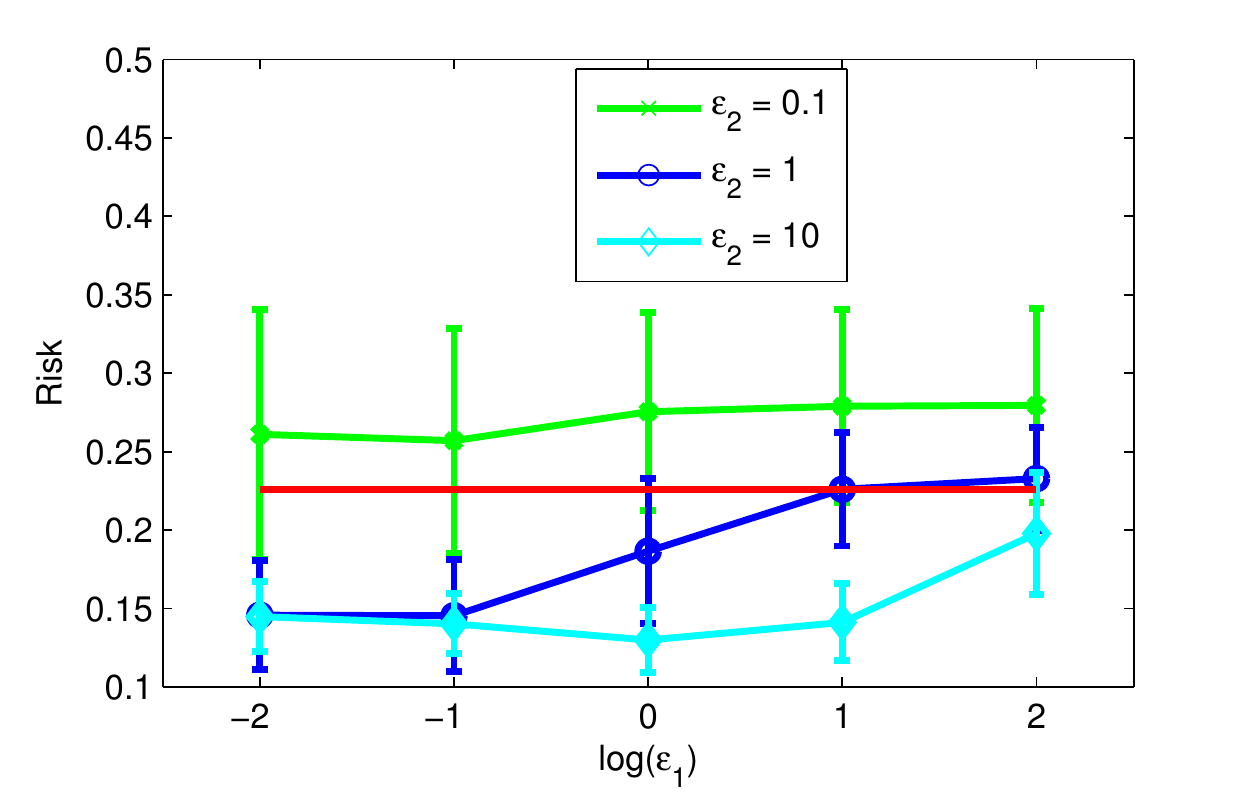}}
\subfigure{\includegraphics[width=0.23\textwidth]{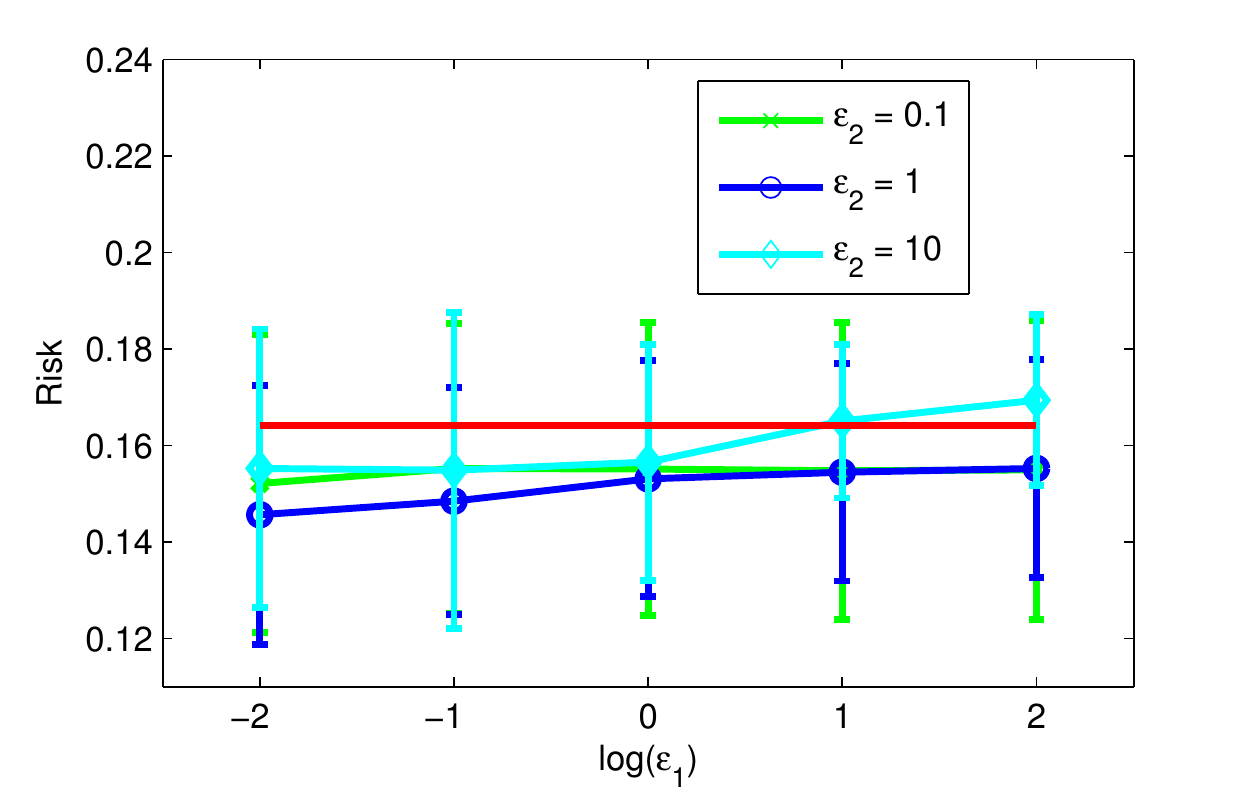}}
\vspace{-3mm}
\caption{Global convergent risks of DTSVM training Task 1 and Task 3 with different $\varepsilon_1$ and $\varepsilon_2$. The left figure and the right figure show the results of Task 1 and Task 3. Task 1 and Task 3 have $1800$ testing samples, Task 1 has $50$ training sample, Task 3 has $400$ training sample. The risks are calculated $15$ times with randomly selected samples. The red line shows the mean risks of CSVM. The network contains $10$ nodes with a degree of $0.8667$. Note that $C=0.01$, $\eta_1 = 1$ and $\eta_2 = 1$. }
\label{fig:Ex1}
\end{figure}
%\vspace{-7mm}
\begin{figure}[]
\centering
\subfigure{\includegraphics[width=0.23\textwidth]{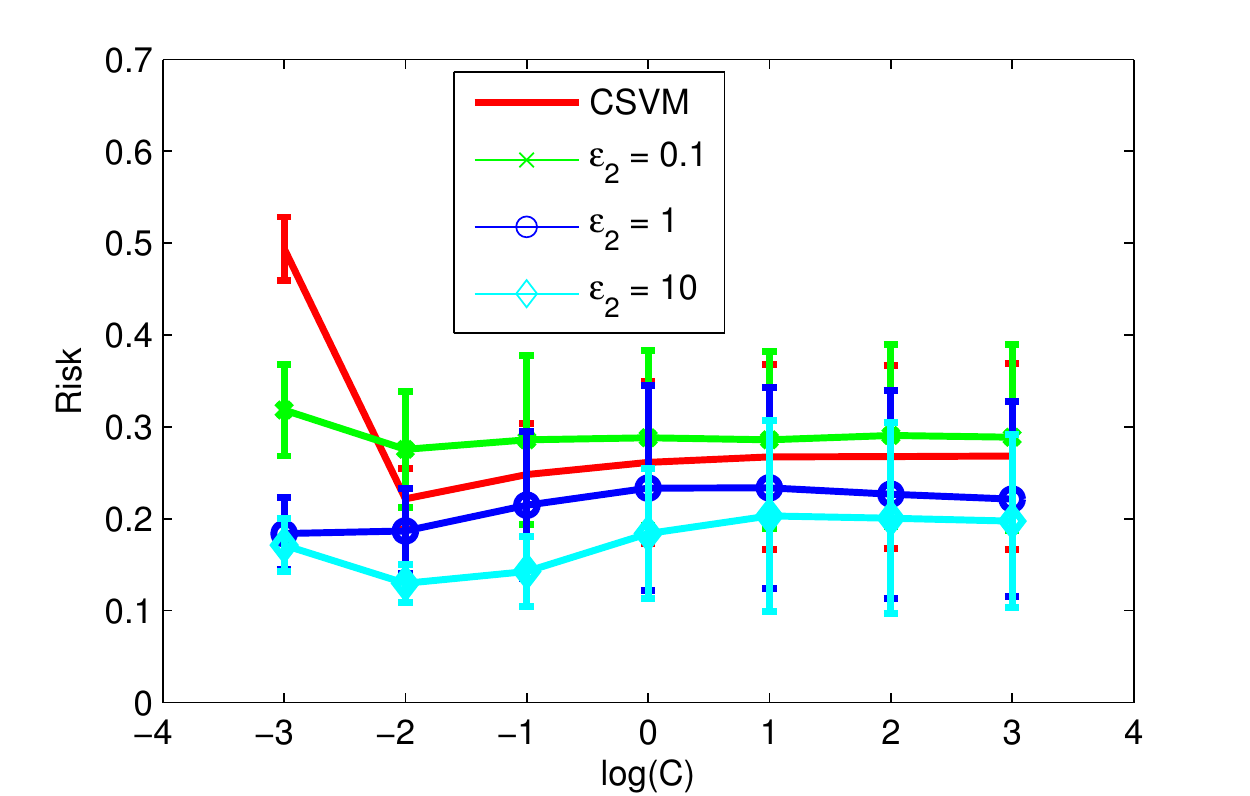}}
\subfigure{\includegraphics[width=0.23\textwidth]{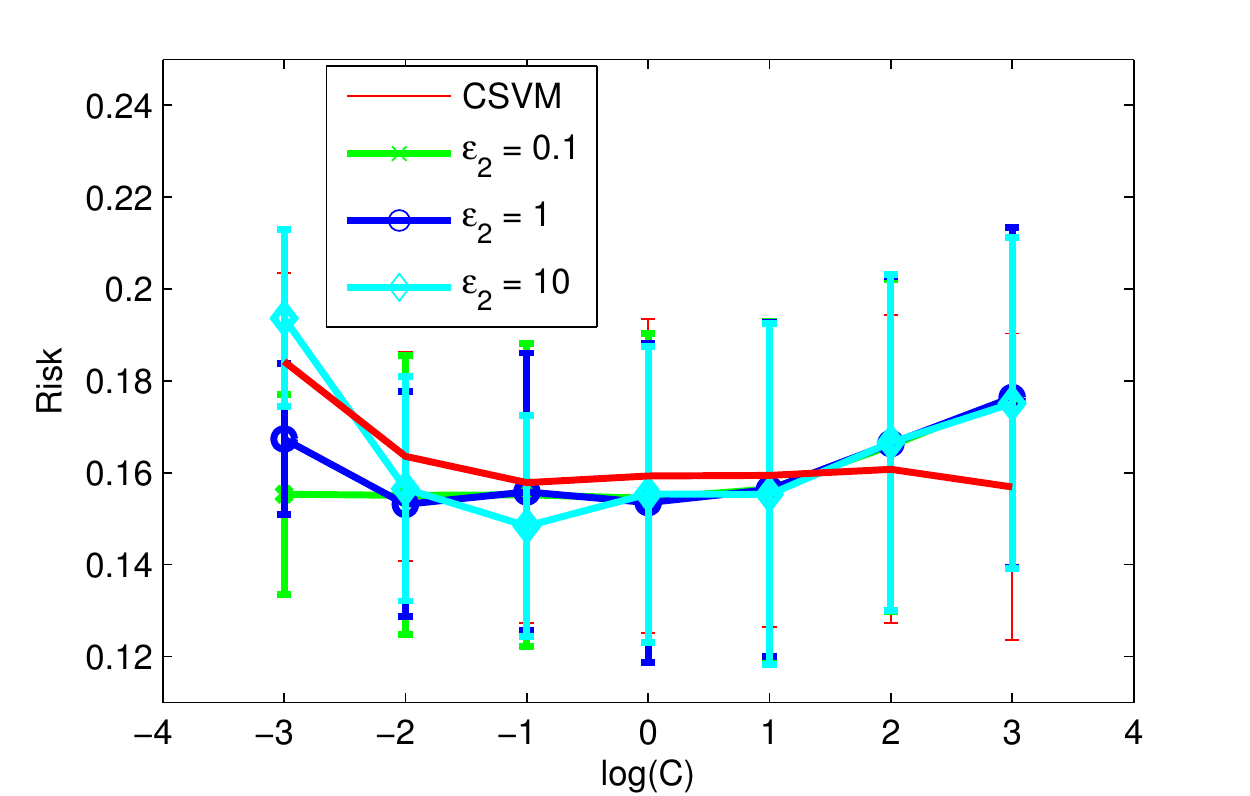}}
\vspace{-3mm}
\caption{Global convergent risks of DTSVM training Task 1 and Task 3 with different $C$ and $\varepsilon_2$. The left figure and the right figure show the results of Task 1 and Task 3. Task 1 and Task 3 have $1800$ testing samples, Task 1 has $50$ training sample, Task 3 has $400$ training sample. The risks are calculated $15$ times with randomly selected samples. The network contains $10$ nodes with a degree of $0.8667$. Note that $\varepsilon_1 = 1$, $\eta_1 = 1$ and $\eta_2 = 1$. }
\label{fig:Ex1C}
\end{figure}
%\vspace{-7mm}
\begin{figure}[]
\centering
\subfigure{\includegraphics[width=0.23\textwidth]{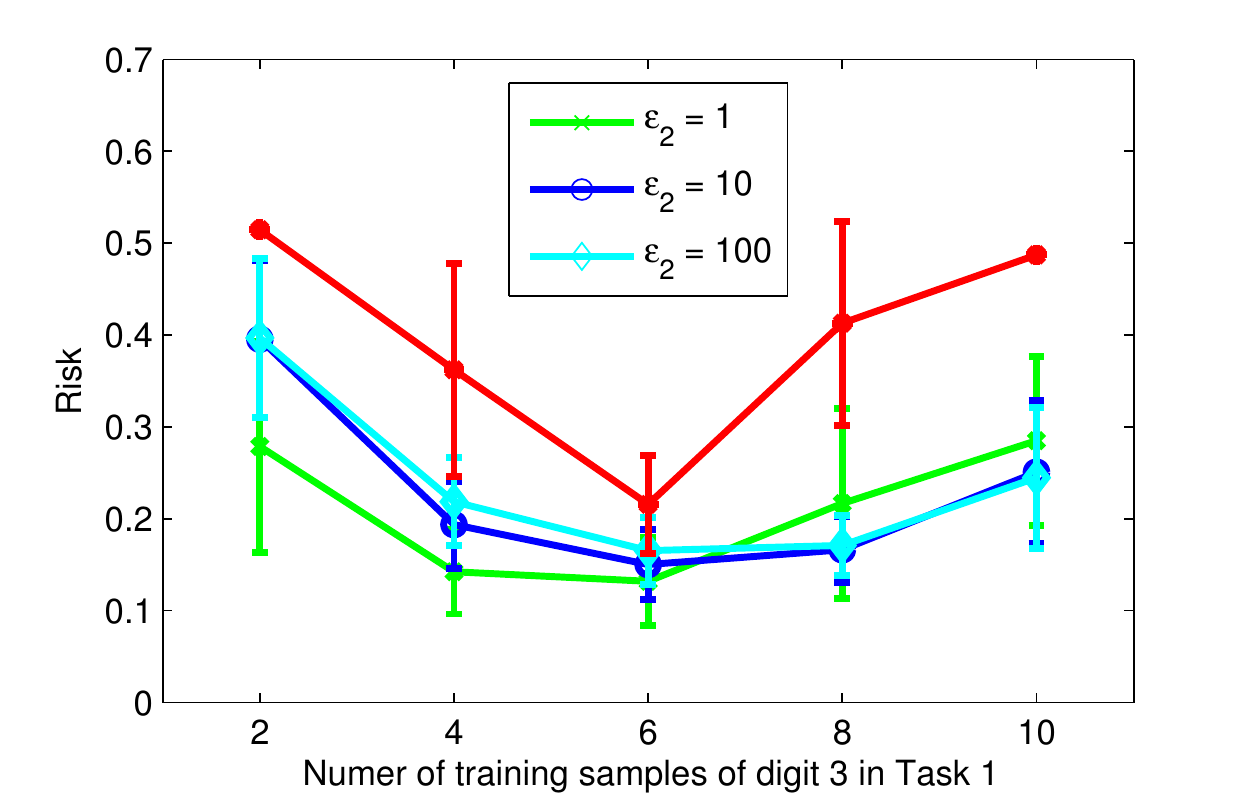}}
\subfigure{\includegraphics[width=0.23\textwidth]{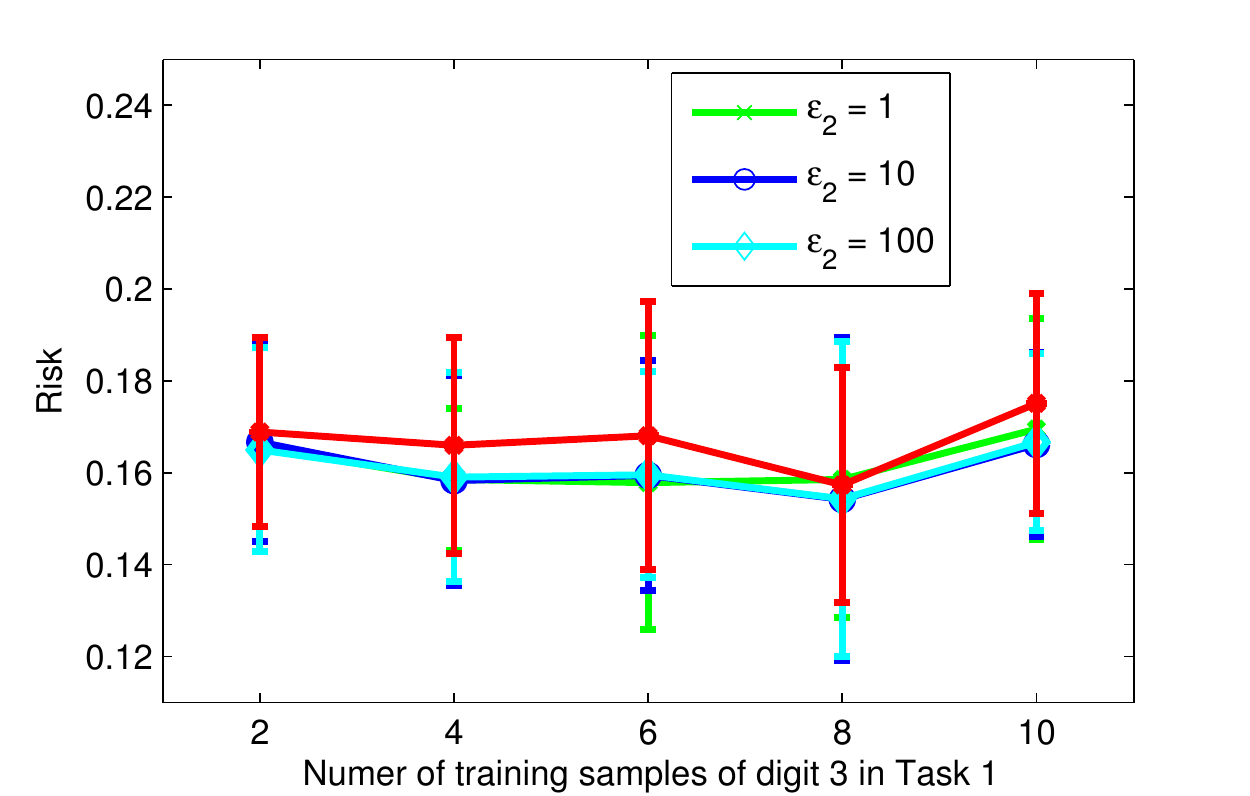}}
\vspace{-3mm}
\caption{Global convergent risks of DTSVM training Task 1 and Task 3 when Task 1 has $12$ training samples with unbalanced labels. The left figure and the right figure show the results of Task 1 and Task 3. Note that Task 3 has $200$ training samples with balanced labels. The risks are calculated $15$ times with randomly selected samples. The red line shows the risks of CSVM. The network is a fully connected network wth $4$ nodes. Note that $\varepsilon_1 = 1$, $\eta_1 = 1$ and $\eta_2 = 1$ and $C = 0.01$.}
\label{fig:Ex3}
\end{figure}
%\vspace{-3mm}
Fig. \ref{fig:DSVMandDTSVM} and Table \ref{tab:Ex4} show the results when the data is trained using DSVM and DTSVM together in the same network. Nodes who contain the data from the source task will train with DTSVM, while nodes who lack that will train with DSVM. We can see that nodes with DTSVM have lower risks. Moreover, nodes with DSVM also have lower risks as they receive information from nodes with DTSVM. This experiment shows that the performances of the nodes who lack training data from the source tasks can be improved with the knowledge transferred from the nodes who contain that data.
%\vspace{-3mm}
\begin{figure}[]
\centering
\subfigure{\includegraphics[width=0.23\textwidth]{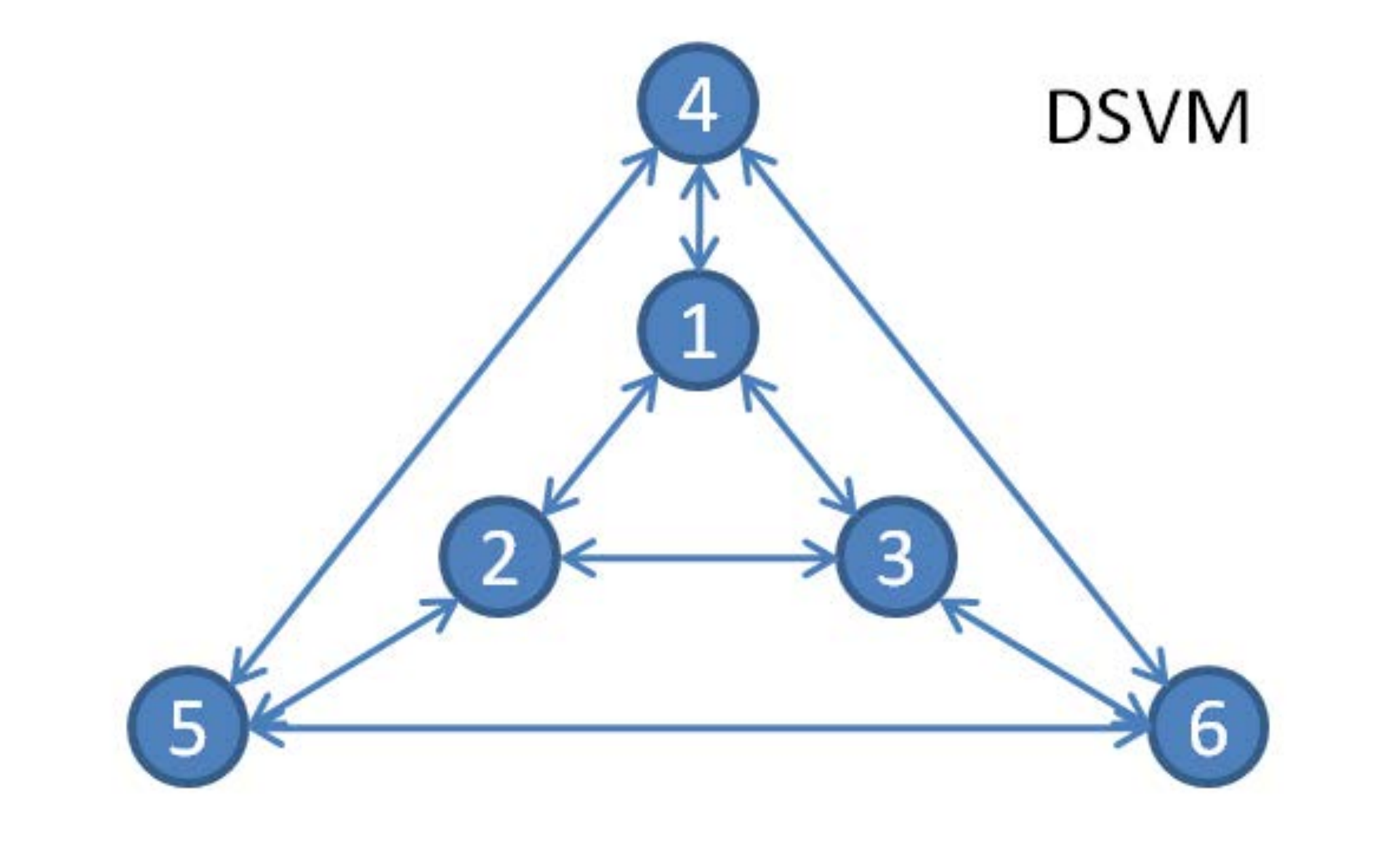}}
\subfigure{\includegraphics[width=0.23\textwidth]{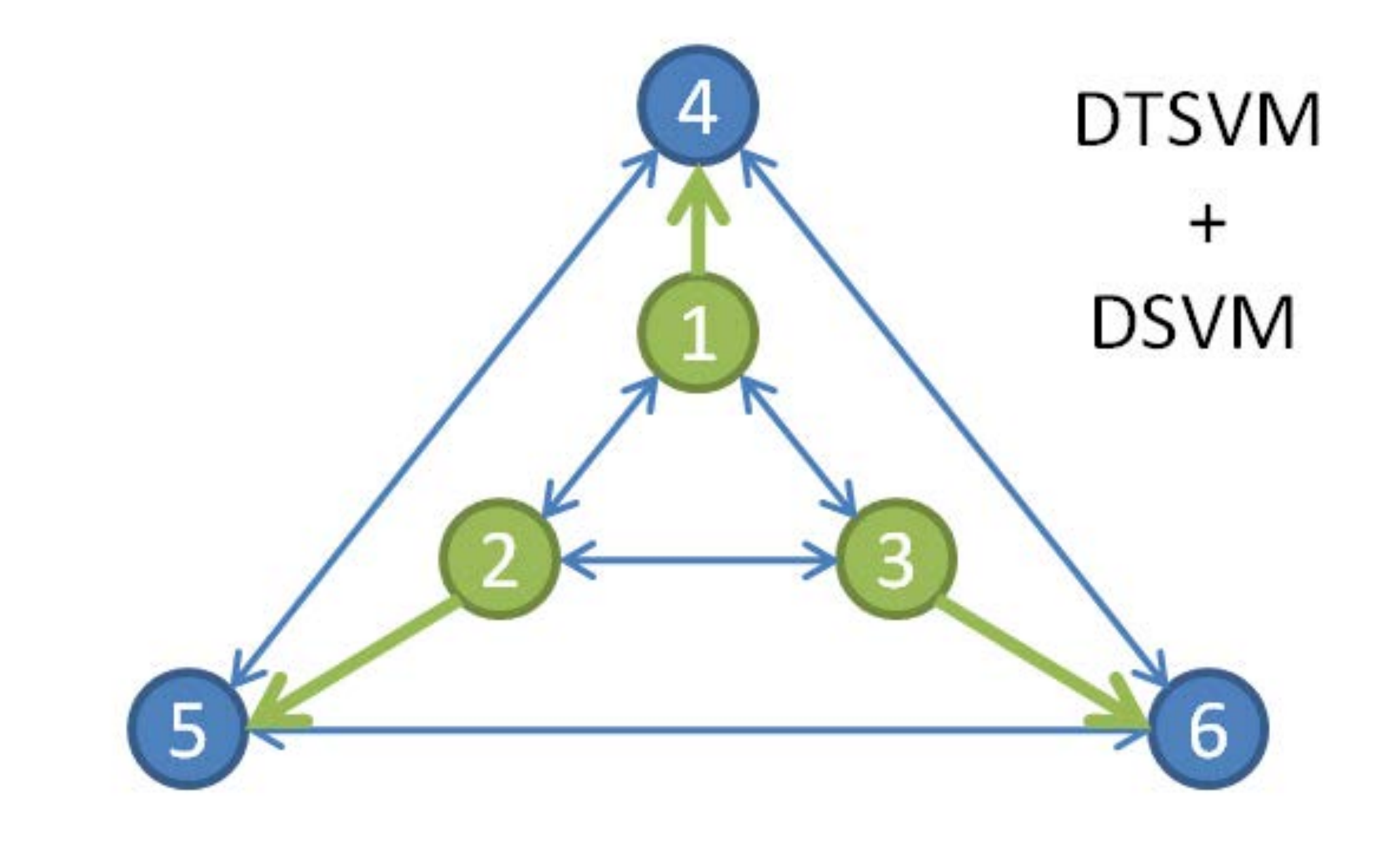}}
\vspace{-3mm}
\caption{DSVM and DTSVM train target task, i.e., Task 2 in the same network. The network has $6$ nodes, each node has $10$ training samples and $300$ testing samples from Task 2. Node 1, 2 and 3 contain $600$ training samples and $1800$ testing samples from the source task, i.e., Task 3. The left figure shows the case of training Task 2 with traditional DSVM, while the right figure shows the case when Node 1, 2 and 3 train Task 2 and 3 with DTSVM and Node 4, 5 and 6 train only Task 2 with DSVM, but Node 1, 2 and 3 also send their decision variables to Node 4, 5 and 6, respectively. Note that $\varepsilon_1 = 1$, $\varepsilon_2 = 10$, $\eta_1 = 1$ and $\eta_2 = 1$ and $C = 0.01$. Numerical results are shown in Table \ref{tab:Ex4}. The risks are calculated  $20$ times with randomly selected samples.}
\label{fig:DSVMandDTSVM}
\end{figure}
%\vspace{-7mm}
\begin{table}[]
\caption{Convergent classification risks $(\%)$ of Task 2. ``G'' indicates the global risks. ``Left'' and ``Right'' indicates the networks in Fig. \ref{fig:DSVMandDTSVM}.}
\vspace{-5mm}
\label{tab:Ex4}
\begin{center}
\begin{small}
\begin{sc}
\begin{tabular}{|c|c|c|c|c|c|c|c|}
\hline
Node & 1 & 2 & 3 & 4  & 5 & 6 & G \\
\hline
Left & 38.3 & 38.2 & 38.5 & 38.5 & 38.1 & 37.9 & 38.3 \\
\hline
STD & 5.7 & 5.5 & 6.3 & 5.1 & 6.1 & 4.1 & 4.9 \\
\hline
Right & 14.6 & 14.8 & 14.6 & 14.2 & 14.6 & 14.6 & 14.6 \\
\hline
STD & 2.8 & 2.5 & 2.7 & 2.9 & 2.7 & 2.4 & 1.9 \\
\hline
\end{tabular}
\end{sc}
\end{small}
\end{center} 
\end{table}
%\vspace{-7mm}
\begin{figure}[]
\centering
\includegraphics[width=0.5\textwidth]{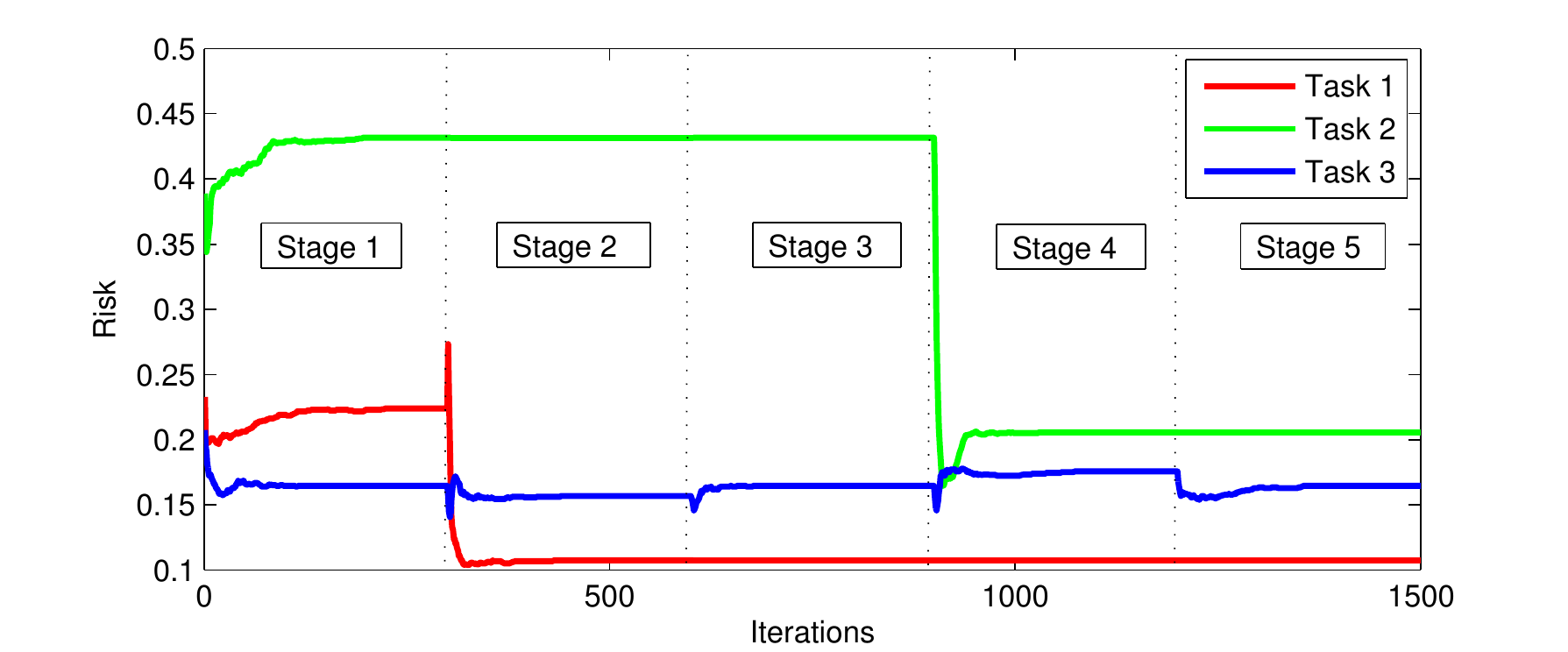}
\vspace{-5mm}
\caption{Evolution of global risks of DTSVM switching with DSVM in real-time. The network is fully connected with $6$ nodes. Each node contains $10$, $10$, $40$ training samples from Task 1, 2 and 3, respectively. Task 1 and 2 are the target tasks, while Task 3 is the source task. In Stage 1, Task 1, 2 and 3 train individually with DSVM;  in Stage 2, Task 1 and 3 train together with DTSVM, Task 2 continues using DSVM; in Stage 3, Task 1 finishes training, Task 2 and 3 train with DSVM; in Stage 4, Task 2 and 3 train together with DTSVM; in Stage 5, Task 2 finishes training, Task 3 trains with DSVM. Note that $\varepsilon_1 = 1$, $\varepsilon_2 = 100$, $\eta_1 = 1$ and $\eta_2 = 1$ and $C = 0.01$.}
\label{fig:Ex5}
%\vspace{-3mm}
\end{figure}
%\vspace{-2mm}
Fig. \ref{fig:Ex5} shows the results of online transfer learning. Task 1 and Task 2 are the target tasks whose risks we aim to reduce, while Task 3 is the source task that can be used to improve the performances of the target tasks. At different stages, Task 1 and Task 2 will enter or leave the DTSVM algorithm with Task 3. Both Task 1 and Task 2 have better performances after training with Task 3. This experiment shows that our DTSVM algorithm can work online without rerunning the whole system.
%\vspace{-3mm}
\section{Conclusion}
In this paper, we have extended a centralized SVM-based transfer learning into a distributed framework. By using ADMoM, we have developed a fully distributed algorithm (DTSVM) where each task in each node operates their own data without transferring training data to other tasks and neighboring nodes. Numerical experiments have shown that our DTSVM algorithm can improve the performances of the target tasks that lack training data or have unbalanced training labels. We have also shown that our algorithm can improve the performances of the nodes who lack the data from the source tasks, by sending information from the nodes who contain the data from the source tasks. We have demonstrated that our algorithm is suitable for online learning where the target tasks can freely enter or leave the training of the source tasks in real-time. One direction of future works is to extend the current framework to nonlinear algorithms and other machine learning algorithms. 
%\vspace{-3mm}
\section*{Appendix A}
Problem (\ref{eq:DistributedTransferMatrix}) can be solved in a distributed way with ADMoM \cite{boyd2011distributed}, which solves the following problem:
\begin{equation}
\label{eq:ADMoM}
\begin{array}{c}
\min\limits_{\{ \mathbf{r}, \omega   \}} F_1(\mathbf{r}) + F_2(\mathbf{\omega}) \\
\begin{array}{lr}
{\text{s.t.}}&{\mathbf{M}\mathbf{r} = \omega.}
\end{array}
\end{array}
\end{equation}
with the following iterations:
\begin{equation}
\label{eq:ADMoMSol1}
{\bf{r}}^{(k + 1)} \in \arg \mathop {\min }\limits_{{\bf{r}}} {F_1}({\bf{r}}) + {\alpha ^{(k)T}}{\bf{Mr}}  + \frac{\eta }{2}{\left\| {{\bf{Mr}} - \omega^{(k)} } \right\|^2},
\end{equation}
%\vspace{-3mm}
\begin{equation}
\label{eq:ADMoMSol2}
\omega^{ (k + 1)} \in \arg \mathop {\min }\limits_{\omega } {F_2}(\omega ) - {\alpha ^{(k)T}}\omega + \frac{\eta }{2}{\left\| {{\bf{M}}\mathbf{r}^{(k+1)} - \omega } \right\|^2},
\end{equation}
%\vspace{-3mm}
\begin{equation}
\label{eq:ADMoMSol3}
\alpha ^{(k + 1)} = \alpha^{ (k)} + \eta ({\bf{Mr}}^{(k + 1)} - \omega^{ (k + 1)}),
\end{equation}
where $\alpha$ denotes the Lagrange multiplier corresponding to the constraint $ \mathbf{Mr} = \omega$. 

We follow a similar step in \cite{forero2010consensus}, by setting \[\mathbf{r}= [\mathbf{r}_{11};...;\mathbf{r}_{1T};...;\mathbf{r}_{V1};...;\mathbf{r}_{VT}] \]
and \[\begin{array}{c}
\omega = [\{\varphi_{1ts}\}_{t,s\in\mathcal{T},s\neq t};...;\{\varphi_{Vts}\}_{t,s\in\mathcal{T},s\neq t};\\  \{ \omega_{vu1} \}_{v\in\mathcal{V},u\in\mathcal{B}_v};...;\{\omega_{vuT}\}_{v\in\mathcal{V},u\in\mathcal{B}_V} ],
\end{array}\]
Problem (\ref{eq:DistributedTransferMatrix}) can be transformed into the form of (\ref{eq:ADMoM}), and thus be solved by Iterations (\ref{eq:ADMoMSol1})-(\ref{eq:ADMoMSol3}). By splitting each iterations into sub-problems and further simplifications, distributed iterations of solving problem (\ref{eq:DistributedTransferMatrix}) can be summarized into the following lemma.
\begin{lemma}
\label{lemmaApp1}
Problem (\ref{eq:DistributedTransferMatrix}) can be solved by the following iterations:
\begin{equation}
\label{eq:AppStep1Soli1}
\{\mathbf{r}_{vt}^{(k+1)},\mathbf{\xi}_{vy}^{(k+1)} \} \in  \arg \min\limits_{\{ \mathbf{r}_{vt},\mathbf{\xi}_{vt}  \}} \mathcal{L} (    \mathbf{r}_{vt}, \mathbf{\xi}_{vy}, \varphi_{vts}^{(k)},\omega_{vut}^{(k)},\alpha_{vts,d}^{(k)},\beta_{vut,d}^{(k)} )
\end{equation}
\begin{equation}
\label{eq:AppStep1Soli2}
 \varphi_{vts}^{(k+1)} \in  \arg \min\limits_{ \varphi_{vts}  } \mathcal{L} (    \mathbf{r}_{vt}^{(k+1)}, \mathbf{\xi}_{vy}^{(k+1)}, \varphi_{vts},\omega_{vut}^{(k)},\alpha_{vts,d}^{(k)},\beta_{vut,d}^{(k)} ),
\end{equation}
\begin{equation}
\label{eq:AppStep1Soli3}
 \omega_{vts}^{(k+1)} \in  \arg \min\limits_{ \omega_{vts}  } \mathcal{L} (    \mathbf{r}_{vt}^{(k+1)}, \mathbf{\xi}_{vy}^{(k+1)}, \varphi_{vts}^{(k)},\omega_{vut},\alpha_{vts,d}^{(k)},\beta_{vut,d}^{(k)} ),
\end{equation}
\begin{equation}
\label{eq:AppStep1Soli4}
\alpha_{vts,1}^{(k+1)} = \alpha_{vts,1}^{(k)} + \eta_1([\mathbf{I,0}]\mathbf{r}_{vt}^{(k+1)} - \varphi_{vts}^{(k+1)}),
\end{equation}
\begin{equation}
\label{eq:AppStep1Soli5}
\alpha_{vts,2}^{(k+1)} = \alpha_{vts,2}^{(k)} + \eta_1( \varphi_{vts}^{(k+1)}-[\mathbf{I,0}]\mathbf{r}_{vs}^{(k+1)} ),
\end{equation}
\begin{equation}
\label{eq:AppStep1Soli6}
\beta_{vut,1}^{(k+1)} = \beta_{vut,1}^{(k)} + \eta_2(\mathbf{r}_{vt}^{(k+1)} - \omega_{vut}^{(k+1)}),
\end{equation}
\begin{equation}
\label{eq:AppStep1Soli7}
\beta_{vut,2}^{(k+1)} = \beta_{vut,2}^{(k)} + \eta_2( \omega_{vut}^{(k+1)}-\mathbf{r}_{ut}^{(k+1)} ),
\end{equation}
where
\begin{equation}
\label{eq:AppStep1L}
\begin{array}{l}
\mathcal{L}(    \mathbf{r}_{vt}, \mathbf{\xi}_{vy}, \varphi_{vts},\omega_{vut},\alpha_{vts,d},\beta_{vut,d} )
=\frac{\epsilon_1}{2} \sum\limits_{v\in\mathcal{V}}\sum\limits_{t\in\mathcal{T}} \mathbf{r}_{vt}^T {\mathbf{M}_1}\mathbf{r}_{vt}  \\ +\frac{\epsilon_2}{2} \sum\limits_{v\in\mathcal{V}}\sum\limits_{t\in\mathcal{T}} \mathbf{r}_{vt}^T {\mathbf{M}_2} \mathbf{r}_{vt}  + VTC  \sum\limits_{v\in\mathcal{V}} \sum\limits_{t\in\mathcal{T}}\mathbf{\xi}_{vt} \\ + \sum\limits_{v\in\mathcal{V}}\sum\limits_{t\in\mathcal{T}} \sum\limits_{s\in\mathcal{T},s\neq t} \left\lbrace \alpha_{vts,1}^T(\left[ \mathbf{I,0}\right] \mathbf{r}_{vt} - \varphi_{vts})  \right\rbrace \\ + \sum\limits_{v\in\mathcal{V}}\sum\limits_{t\in\mathcal{T}} \sum\limits_{s\in\mathcal{T},s\neq t} \left\lbrace   \alpha_{vts,2}^T(\varphi_{vts}-\left[ \mathbf{I,0}\right]\mathbf{r}_{vs}) \right\rbrace \\ + \sum\limits_{v\in\mathcal{V}}\sum\limits_{u\in\mathcal{B}_v}\sum\limits_{t\in\mathcal{T}} \left\lbrace \beta_{vut,1}^T(\mathbf{r}_{vt} - \omega_{vut})+\beta_{vut,2}^T(\omega_{vut} - \mathbf{r}_{ut})\right\rbrace\\ + \frac{\eta_1}{2}\sum\limits_{v\in\mathcal{V}}\sum\limits_{t\in\mathcal{T}} \sum\limits_{s\in\mathcal{T},s\neq t} \left\lbrace \parallel \left[ \mathbf{I,0}\right] \mathbf{r}_{vt} - \varphi_{vts}\parallel_2^2  + \parallel\varphi_{vts}-\left[ \mathbf{I,0}\right]\mathbf{r}_{vs}\parallel_2^2 \right\rbrace \\ +\frac{\eta_2}{2} \sum\limits_{v\in\mathcal{V}}\sum\limits_{u\in\mathcal{B}_v}\sum\limits_{t\in\mathcal{T}} \left\lbrace \parallel \mathbf{r}_{vt} - \omega_{vut}\parallel_2^2 +\parallel \omega_{vut} - \mathbf{r}_{ut}\parallel_2^2\right\rbrace .
\end{array}
\end{equation}
\end{lemma}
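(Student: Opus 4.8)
The plan is to cast Problem~(\ref{eq:DistributedTransferMatrix}) exactly into the generic two-block form~(\ref{eq:ADMoM}) and then read off the iterations (\ref{eq:AppStep1Soli1})--(\ref{eq:AppStep1Soli7}) from the generic ADMoM recursion~(\ref{eq:ADMoMSol1})--(\ref{eq:ADMoMSol3}). First I would absorb the two inequality constraints $\mathbf{Y}_{vt}\mathbf{X}_{vt}[\mathbf{I,I}]\mathbf{r}_{vt}\succeq\mathbf{1}_{vt}-\mathbf{\xi}_{vt}$ and $\mathbf{\xi}_{vt}\succeq\mathbf{0}_{vt}$ into $F_1$ by letting $F_1(\mathbf{r})$ be the quadratic-plus-linear objective of (\ref{eq:DistributedTransferMatrix}) augmented with the indicator of the polyhedron those inequalities cut out, and I would take $F_2\equiv 0$ since the splitting variables $\varphi_{vts},\omega_{vut}$ carry no cost. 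Then I stack the primal block as $\mathbf{r}=[\mathbf{r}_{11};\dots;\mathbf{r}_{VT}]$ and the copy block as $\omega=[\{\varphi_{vts}\};\{\omega_{vut}\}]$, and I construct $\mathbf{M}$ so that $\mathbf{Mr}=\omega$, row-block by row-block, reproduces the four consensus equalities $[\mathbf{I,0}]\mathbf{r}_{vt}=\varphi_{vts}$, $\varphi_{vts}=[\mathbf{I,0}]\mathbf{r}_{vs}$, $\mathbf{r}_{vt}=\omega_{vut}$, $\omega_{vut}=\mathbf{r}_{ut}$: each equality contributes one block row whose only nonzero entry is a copy of $[\mathbf{I,0}]$ or of $\mathbf{I}_{2p+2}$ in the column slot of the relevant $\mathbf{r}$. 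With this identification the augmented Lagrangian $F_1(\mathbf{r})+\alpha^T\mathbf{Mr}+\tfrac{\eta}{2}\|\mathbf{Mr}-\omega\|^2$ becomes precisely the $\mathcal{L}$ of (\ref{eq:AppStep1L}), once the single penalty weight $\eta$ is specialized to $\eta_1$ on the rows coming from the $\varphi$-constraints and $\eta_2$ on the rows coming from the $\omega_{vut}$-constraints, and the multiplier $\alpha$ is partitioned conformably into the four sub-multipliers $\alpha_{vts,1},\alpha_{vts,2},\beta_{vut,1},\beta_{vut,2}$.

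Next I would specialize the three generic ADMoM steps. Step~(\ref{eq:ADMoMSol1}) is the joint minimization of $\mathcal{L}$ over $(\mathbf{r},\mathbf{\xi})$ with the copies and multipliers frozen; because $F_1$ is a sum over $(v,t)$ of terms each involving only $(\mathbf{r}_{vt},\mathbf{\xi}_{vt})$, and because, after re-indexing, every quadratic penalty term in $\mathcal{L}$ involves a single block $\mathbf{r}_{vt}$, this minimization separates across nodes and tasks, which is exactly (\ref{eq:AppStep1Soli1}). Step~(\ref{eq:ADMoMSol2}), the minimization over the copy block with $F_2\equiv0$, likewise separates: the $\mathcal{L}$-terms containing a given $\varphi_{vts}$ form the subproblem (\ref{eq:AppStep1Soli2}) and those containing a given $\omega_{vut}$ form (\ref{eq:AppStep1Soli3}). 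Finally the dual ascent~(\ref{eq:ADMoMSol3}), applied one block row of $\mathbf{M}$ at a time, yields one update per scalar consensus constraint, namely (\ref{eq:AppStep1Soli4})--(\ref{eq:AppStep1Soli7}), with step sizes $\eta_1$ and $\eta_2$ inherited from the corresponding penalty weights.

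The only step that needs genuine care rather than bookkeeping is the separability invoked in the $\mathbf{r}$-update: one must re-index $\sum_{t}\sum_{s\neq t}\|\varphi_{vts}-[\mathbf{I,0}]\mathbf{r}_{vs}\|^2$ (swap the roles of $t$ and $s$) and $\sum_{u\in\mathcal{B}_v}\|\omega_{vut}-\mathbf{r}_{ut}\|^2$ (use $u\in\mathcal{B}_v\Leftrightarrow v\in\mathcal{B}_u$), together with the analogous linear terms, so that after collecting terms the block $\mathbf{r}_{vt}$ appears only in $\mathcal{L}$-summands indexed by that same $(v,t)$; this is what makes (\ref{eq:AppStep1Soli1}) a genuinely local quadratic program and, upon differentiation, produces the coefficient matrix $\mathbf{U}_{vt}$ of (\ref{eq:DistributedTransferU}). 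Once the separability is established, checking that $F_2\equiv0$ leaves each copy variable governed by a strictly convex quadratic (so (\ref{eq:AppStep1Soli2})--(\ref{eq:AppStep1Soli3}) are well posed) and that the partition of $\alpha$ matches the block structure of $\mathbf{M}$ is routine; convergence of the resulting recursion to a minimizer of (\ref{eq:DistributedTransferMatrix}) then follows from the standard two-block ADMoM convergence guarantee, since $F_1$ is closed proper convex and $F_2\equiv0$ trivially is.
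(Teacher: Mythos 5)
Your proposal is correct and follows essentially the same route as the paper: the paper likewise stacks $\mathbf{r}=[\mathbf{r}_{11};\dots;\mathbf{r}_{VT}]$ and the copy variables $\{\varphi_{vts}\},\{\omega_{vut}\}$, casts Problem~(\ref{eq:DistributedTransferMatrix}) into the generic two-block form~(\ref{eq:ADMoM}) following \cite{forero2010consensus}, and obtains (\ref{eq:AppStep1Soli1})--(\ref{eq:AppStep1Soli7}) by splitting the three generic steps (\ref{eq:ADMoMSol1})--(\ref{eq:ADMoMSol3}) into per-$(v,t)$, per-$\varphi_{vts}$, and per-$\omega_{vut}$ subproblems. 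The details you add --- absorbing the inequality constraints into $F_1$ via an indicator function, taking $F_2\equiv 0$, specializing the single penalty $\eta$ to $\eta_1$ and $\eta_2$, and the re-indexing that makes the $\mathbf{r}$-update separable --- are exactly the bookkeeping the paper compresses into ``by splitting each iteration into sub-problems and further simplifications.''
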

Setting initial conditions $\alpha_{vts,1}^{(0)}=\alpha_{vts,2}^{(0)}=\mathbf{0}_{(p+1)\times 1}$ and $\beta_{vut,1}^{(0)}=\beta_{vut,2}^{(0)}=\mathbf{0}_{(2p+2)\times 1}$, we have $\alpha_{vts,1}^{(k)}=\alpha_{vts,2}^{(k)}$ and $\beta_{vut,1}^{(k)}=\beta_{vut,2}^{(k)}$ for $k \geq 0 $. We further define $\alpha_{vt} =\sum_{s\in\mathcal{T},s\neq t} \alpha_{vts,1}$ and $\beta_{vt} = \sum_{u\in\mathcal{B}_v} \beta_{vut,1}$. Note that, $ \varphi_{vts} = \frac{1}{2}[ \mathbf{I,0}]( \mathbf{r}_{vt} + \mathbf{r}_{vs}  ) $,
and $  \omega_{vut} = \frac{1}{2} ( \mathbf{r}_{vt} + \mathbf{r}_{ut}  )$, 
which can be solved directly from (\ref{eq:AppStep1Soli2}) and (\ref{eq:AppStep1Soli3}). With further simplification, iterations (\ref{eq:AppStep1Soli1})-(\ref{eq:AppStep1Soli7}) can be simplified as the following lemma. 
\begin{lemma}
\label{lemmaApp2}
With $\alpha_{vt}^{(0)}=\mathbf{0}_{(p+1)\times 1}$ and $\beta_{vt}^{(0)}=\mathbf{0}_{(2p+2)\times 1}$, iterations (\ref{eq:AppStep1Soli1})-(\ref{eq:AppStep1Soli7}) can be reduced into the following iterations:
\begin{equation}
\label{eq:AppStep2Soli1}
\{\mathbf{r}_{vt}^{(k+1)},\mathbf{\xi}_{vy}^{(k+1)} \} \in  \arg \min\limits_{\{ \mathbf{r}_{vt},\mathbf{\xi}_{vt}  \}} \mathcal{L}' (    \mathbf{r}_{vt}, \mathbf{\xi}_{vy}, \alpha_{vt}^{(k)},\beta_{vt}^{(k)} ),
\end{equation}
\begin{equation}
\label{eq:AppStep2Soli2}
\alpha_{vt}^{(k+1)} = \alpha_{vt}^{(k)} + \frac{\eta_1}{2}[\mathbf{I,0}]\sum\limits_{s\in\mathcal{T},s\neq t}(\mathbf{r}_{vt}^{(k+1)}-\mathbf{r}_{vs}^{(k+1)}),
\end{equation}
\begin{equation}
\label{eq:AppStep2Soli3}
\beta_{vt}^{(k+1)} = \beta_{vt}^{(k)} + \frac{\eta_2}{2}\sum\limits_{u\in\mathcal{B}_v}(\mathbf{r}_{vt}^{(k+1)}-\mathbf{r}_{ut}^{(k+1)}),
\end{equation}
where 
\begin{equation}
\label{eq:Appstep2L}
\begin{array}{l}
\mathcal{L}' (    \mathbf{r}_{vt}, \mathbf{\xi}_{vy}, \alpha_{vt},\beta_{vt} )
=\frac{\epsilon_1}{2} \sum\limits_{v\in\mathcal{V}}\sum\limits_{t\in\mathcal{T}} \mathbf{r}_{vt}^T {\mathbf{M}_1}\mathbf{r}_{vt}  \\ +\frac{\epsilon_2}{2} \sum\limits_{v\in\mathcal{V}}\sum\limits_{t\in\mathcal{T}} \mathbf{r}_{vt}^T {\mathbf{M}_2} \mathbf{r}_{vt}  + VTC  \sum\limits_{v\in\mathcal{V}} \sum\limits_{t\in\mathcal{T}}\mathbf{\xi}_{vt} \\ + 2\sum\limits_{v\in\mathcal{V}}\sum\limits_{t\in\mathcal{T}}  \alpha_{vt}^T\left[ \mathbf{I,0}\right] \mathbf{r}_{vt}  + 2\sum\limits_{v\in\mathcal{V}}\sum\limits_{t\in\mathcal{T}}  \beta_{vt}^T\mathbf{r}_{vt} \\ + \eta_1 \sum\limits_{v\in\mathcal{V}}\sum\limits_{t\in\mathcal{T}} \sum\limits_{s\in\mathcal{T},s\neq t}  \left|\left| \left[ \mathbf{I,0}\right] \mathbf{r}_{vt} - \frac{1}{2}[ \mathbf{I,0}]( \mathbf{r}_{vt}^{(k)} + \mathbf{r}_{vs}^{(k)}  ) \right|\right|_2^2   \\ + \eta_2\sum\limits_{v\in\mathcal{V}}\sum\limits_{u\in\mathcal{B}_v}\sum\limits_{t\in\mathcal{T}} \left|\left| \mathbf{r}_{vt} - \frac{1}{2} ( \mathbf{r}_{vt}^{(k)} + \mathbf{r}_{ut}^{(k)}) \right|\right|_2^2 .
\end{array}
\end{equation}
\end{lemma}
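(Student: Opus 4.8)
The plan is to derive iterations (\ref{eq:AppStep2Soli1})--(\ref{eq:AppStep2Soli3}) from (\ref{eq:AppStep1Soli1})--(\ref{eq:AppStep1Soli7}) by substituting the closed-form minimizers of the auxiliary variables into $\mathcal{L}$ and collapsing the redundant double sums. The first ingredient is the pair of facts already recorded before the lemma: with the zero initialization one has $\alpha_{vts,1}^{(k)}=\alpha_{vts,2}^{(k)}$ and $\beta_{vut,1}^{(k)}=\beta_{vut,2}^{(k)}$, and the $\varphi$- and $\omega$-subproblems (\ref{eq:AppStep1Soli2})--(\ref{eq:AppStep1Soli3}) have the explicit solutions $\varphi_{vts}^{(k)}=\tfrac12[\mathbf{I,0}](\mathbf{r}_{vt}^{(k)}+\mathbf{r}_{vs}^{(k)})$ and $\omega_{vut}^{(k)}=\tfrac12(\mathbf{r}_{vt}^{(k)}+\mathbf{r}_{ut}^{(k)})$. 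I would prove these together by induction on $k$: $\mathcal{L}$ is strictly convex quadratic in $\varphi_{vts}$, so its first-order condition gives $\varphi_{vts}=\tfrac12[\mathbf{I,0}](\mathbf{r}_{vt}+\mathbf{r}_{vs})+\tfrac{1}{2\eta_1}(\alpha_{vts,1}-\alpha_{vts,2})$, and the inductive hypothesis removes the multiplier term; substituting this back into (\ref{eq:AppStep1Soli4})--(\ref{eq:AppStep1Soli5}) shows $\alpha_{vts,1}$ and $\alpha_{vts,2}$ receive the identical increment $\tfrac{\eta_1}{2}[\mathbf{I,0}](\mathbf{r}_{vt}^{(k+1)}-\mathbf{r}_{vs}^{(k+1)})$, closing the induction; the $\beta/\omega$ case is identical. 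I would also note the companion antisymmetry $\alpha_{vts,1}^{(k)}=-\alpha_{vst,1}^{(k)}$ and $\beta_{vut,1}^{(k)}=-\beta_{uvt,1}^{(k)}$, which follows from the same recursions together with $\varphi_{vts}=\varphi_{vst}$, $\omega_{vut}=\omega_{uvt}$ and the zero start; this is what lets the two halves of each consensus pair be merged.

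With these in hand, the reduction of the $\mathbf{r}$-step (\ref{eq:AppStep1Soli1}) to (\ref{eq:AppStep2Soli1}) is a substitution-and-bookkeeping argument: plug the closed forms $\varphi_{vts}^{(k)},\omega_{vut}^{(k)}$ into $\mathcal{L}$, drop every term that does not involve the optimization variables $\{\mathbf{r}_{vt},\xi_{vt}\}$ (in particular the terms containing $\varphi^{(k)}$ or $\omega^{(k)}$ alone), and then relabel the double sums. For the multiplier-linear part, the piece $-\alpha_{vts,2}^T[\mathbf{I,0}]\mathbf{r}_{vs}$ becomes, after the swap $t\leftrightarrow s$ and the use of $\alpha_{vst,2}=\alpha_{vst,1}=-\alpha_{vts,1}$, a second copy of $\alpha_{vts,1}^T[\mathbf{I,0}]\mathbf{r}_{vt}$, so the two copies add to $2\sum_{v,t}\big(\sum_{s\neq t}\alpha_{vts,1}\big)^T[\mathbf{I,0}]\mathbf{r}_{vt}=2\sum_{v,t}\alpha_{vt}^T[\mathbf{I,0}]\mathbf{r}_{vt}$; the same swap across an edge $\{v,u\}$ (valid because the graph is undirected, $u\in\mathcal{B}_v\iff v\in\mathcal{B}_u$) turns the $\beta$-linear terms into $2\sum_{v,t}\beta_{vt}^T\mathbf{r}_{vt}$. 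For the quadratic penalties, substituting $\varphi_{vts}^{(k)}$ makes the two summands $\tfrac{\eta_1}{2}\|[\mathbf{I,0}]\mathbf{r}_{vt}-\varphi_{vts}^{(k)}\|^2$ and $\tfrac{\eta_1}{2}\|\varphi_{vts}^{(k)}-[\mathbf{I,0}]\mathbf{r}_{vs}\|^2$ equal, up to the $t\leftrightarrow s$ relabeling and the symmetry of $\|\cdot\|$, to a common term $\tfrac{\eta_1}{2}\|[\mathbf{I,0}]\mathbf{r}_{vt}-\tfrac12[\mathbf{I,0}](\mathbf{r}_{vt}^{(k)}+\mathbf{r}_{vs}^{(k)})\|^2$, so the pair sums to the $\eta_1\|\cdot\|_2^2$ block of $\mathcal{L}'$, and likewise for the $\eta_2$ block. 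Since the terms $\tfrac{\epsilon_1}{2}\mathbf{r}_{vt}^T\mathbf{M}_1\mathbf{r}_{vt}$, $\tfrac{\epsilon_2}{2}\mathbf{r}_{vt}^T\mathbf{M}_2\mathbf{r}_{vt}$ and $VTC\,\xi_{vt}$ are untouched, this shows $\mathcal{L}(\cdot,\varphi^{(k)},\omega^{(k)},\alpha^{(k)},\beta^{(k)})=\mathcal{L}'(\cdot,\alpha_{vt}^{(k)},\beta_{vt}^{(k)})+\mathrm{const}$ over the feasible set, so (\ref{eq:AppStep1Soli1}) and (\ref{eq:AppStep2Soli1}) share the same minimizers.

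Finally, (\ref{eq:AppStep2Soli2}) is obtained by summing (\ref{eq:AppStep1Soli4}) over $s\in\mathcal{T},\,s\neq t$ and inserting $\varphi_{vts}^{(k+1)}=\tfrac12[\mathbf{I,0}](\mathbf{r}_{vt}^{(k+1)}+\mathbf{r}_{vs}^{(k+1)})$ together with $\alpha_{vt}:=\sum_{s\neq t}\alpha_{vts,1}$; (\ref{eq:AppStep2Soli3}) is obtained by summing (\ref{eq:AppStep1Soli6}) over $u\in\mathcal{B}_v$ and inserting $\omega_{vut}^{(k+1)}=\tfrac12(\mathbf{r}_{vt}^{(k+1)}+\mathbf{r}_{ut}^{(k+1)})$ with $\beta_{vt}:=\sum_{u\in\mathcal{B}_v}\beta_{vut,1}$. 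I expect the only real difficulty to be the index bookkeeping in the middle step: one must keep straight which member of each redundant pair ($\varphi_{vts}$ vs.\ $\varphi_{vst}$, $\omega_{vut}$ vs.\ $\omega_{uvt}$, and the attached split multipliers) carries which sign, since an error there would corrupt the coefficients $2$, $\eta_1$, $\eta_2$ in $\mathcal{L}'$; everything else is a routine convex-quadratic computation.
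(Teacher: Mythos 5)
Your proposal is correct and follows essentially the same route as the paper, which merely asserts (in the paragraph preceding the lemma) that the zero initialization forces $\alpha_{vts,1}^{(k)}=\alpha_{vts,2}^{(k)}$ and $\beta_{vut,1}^{(k)}=\beta_{vut,2}^{(k)}$, that $\varphi_{vts}$ and $\omega_{vut}$ admit the averaging closed forms, and that the rest is ``further simplification.'' You supply exactly the missing details --- the induction closing the multiplier-equality claim, the antisymmetry $\alpha_{vst,1}=-\alpha_{vts,1}$, $\beta_{uvt,1}=-\beta_{vut,1}$, and the index swaps that produce the factors of $2$ and the single $\eta_1,\eta_2$ penalty blocks in $\mathcal{L}'$ --- and each of these checks out.
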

Introducing unused constraints $\mathbf{Y}_{vt}\mathbf{X}_{vt}[\mathbf{I,I}]\mathbf{r}_{vt} \succeq \mathbf{1}_{vt} - \mathbf{\xi}_{vt}$ and $\mathbf{\xi}_{vt} \succeq \mathbf{0}_{vt}$ with Lagrangian multipliers $\lambda_{vt}$ and $\gamma_{vt}$ into (\ref{eq:AppStep2Soli1}),
\iffalse
\begin{equation}
\label{eq:AppStep3L}
\begin{array}{l}
\mathcal{L}'' (    \mathbf{r}_{vt}, \mathbf{\xi}_{vy}, \alpha_{vt}^{(k)},\beta_{vt}^{(k)} )
=\frac{\epsilon_1}{2} \sum\limits_{v\in\mathcal{V}}\sum\limits_{t\in\mathcal{T}} \mathbf{r}_{vt}^T {\mathbf{M}_1}\mathbf{r}_{vt}  \\ +\frac{\epsilon_2}{2} \sum\limits_{v\in\mathcal{V}}\sum\limits_{t\in\mathcal{T}} \mathbf{r}_{vt}^T {\mathbf{M}_2} \mathbf{r}_{vt}  + VTC  \sum\limits_{v\in\mathcal{V}} \sum\limits_{t\in\mathcal{T}}\mathbf{\xi}_{vt} \\
-\sum\limits_{v\in\mathcal{V}} \sum\limits_{t\in\mathcal{T}} \lambda_{vt}^T(\mathbf{Y}_{vt}\mathbf{X}_{vt}[\mathbf{I,I}]\mathbf{r}_{vt} -\mathbf{1}_{vt} + \mathbf{\xi}_{vt} )-\sum\limits_{v\in\mathcal{V}} \sum\limits_{t\in\mathcal{T}} \gamma_{vt}^T\mathbf{\xi}_{vt}\\
 + 2\sum\limits_{v\in\mathcal{V}}\sum\limits_{t\in\mathcal{T}}  \alpha_{vt}^T\left[ \mathbf{I,0}\right] \mathbf{r}_{vt}  + 2\sum\limits_{v\in\mathcal{V}}\sum\limits_{t\in\mathcal{T}}  \beta_{vt}^T\mathbf{r}_{vt} \\ + \eta_1 \sum\limits_{v\in\mathcal{V}}\sum\limits_{t\in\mathcal{T}} \sum\limits_{s\in\mathcal{T},s\neq t}  \left|\left| \left[ \mathbf{I,0}\right] \mathbf{r}_{vt} - \frac{1}{2}[ \mathbf{I,0}]( \mathbf{r}_{vt}^{(k)} + \mathbf{r}_{vs}^{(k)}  ) \right|\right|_2^2   \\ + \eta_2\sum\limits_{v\in\mathcal{V}}\sum\limits_{u\in\mathcal{B}_v}\sum\limits_{t\in\mathcal{T}} \left|\left| \mathbf{r}_{vt} - \frac{1}{2} ( \mathbf{r}_{vt}^{(k)} + \mathbf{r}_{ut}^{(k)}) \right|\right|_2^2 .
\end{array}
\end{equation}
\fi
by KKT conditions, we can achieve:
\begin{equation}
\label{eq:AppStep31}
\begin{array}{c}
\mathbf{r}_{vt}=\mathbf{U}_{vt}^{-1} \bigg( [\mathbf{I,I}]^T\mathbf{X}_{vt}^T\mathbf{Y}_{vt}\lambda_{vt} - 2[\mathbf{I,0}]^T\alpha_{vt}  -2\beta_{vt}\\ \ \ \ \ \ +\eta_1 \sum\limits_{s\in\mathcal{T},s\neq t}[\mathbf{I,0}]^T[\mathbf{I,0}](\mathbf{r}_{vt}^{(k)}+\mathbf{r}_{vs}^{(k)}) + \eta_2 \sum\limits_{u\in\mathcal{B}_v} (\mathbf{r}_{vt}^{(k)}+\mathbf{r}_{ut}^{(k)}) \bigg).
\end{array}
\end{equation}
\begin{equation}
\label{eq:AppStep32}
VTC\mathbf{1}_{vt}-\lambda_{vt}-\gamma_{vt}=\mathbf{0}_{vt}.
\end{equation}
Note that,
\begin{equation}
\label{eq:AppStep3U}
\mathbf{U}_{vt} = \epsilon_1\mathbf{M}_1 + \epsilon_2 \mathbf{M}_2 + 2\eta_1(T-1)[\mathbf{I,0}]^T[\mathbf{I,0}]+2\eta_2|\mathcal{B}_v|\mathbf{I}_{2p+2}.
 \end{equation}
Letting
\begin{equation}
\label{eq:AppStep3f}
\begin{array}{l}
\mathbf{f}_{vt} =  2[\mathbf{I,0}]^T\alpha_{vt}  + 2\beta_{vt}\\ \ \ \ \ \ -\eta_1 \sum\limits_{s\in\mathcal{T},s\neq t}[\mathbf{I,0}]^T[\mathbf{I,0}](\mathbf{r}_{vt}^{(k)}+\mathbf{r}_{vs}^{(k)}) - \eta_2 \sum\limits_{u\in\mathcal{B}_v} (\mathbf{r}_{vt}^{(k)}+\mathbf{r}_{ut}^{(k)}),
\end{array}
\end{equation}
we can also achieve:
\begin{equation}
\label{eq:AppStep3Lambda}
\begin{array}{l}
\lambda_{vt} \in\arg\max\limits_{\lambda_{vt}} -\frac{1}{2}\lambda_{vt}^T\mathbf{Y}_{vt}\mathbf{X}_{vt}[\mathbf{I,I}]\mathbf{U}_{vt}^{-1}[\mathbf{I,I}]^T\mathbf{X}_{vt}^T\mathbf{Y}_{vt}\lambda_{vt} \\ \ \ \ \ \ \ \ \ \ \ \ \ \ \ \ \ + (\mathbf{1}_{vt}+\mathbf{Y}_{vt}\mathbf{X}_{vt}[\mathbf{I,I}]\mathbf{U}_{vt}^{-1}\mathbf{f}_{vt})^T\lambda_{vt}.
\end{array}
\end{equation}
Thus, iterations of solving Problem (\ref{eq:DistributedTransferMatrix}) can be summarized as Proposition 1.

Note that, since $\mathbf{M}_1$ and $\mathbf{M}_2$ are semi-positive matrices, the objective function of Problem (\ref{eq:DistributedTransferMatrix}) can be shown that it is closed, proper, and convex. Moreover, it is easy to see that the unaugmented Lagrangian $\mathcal{L}$ in (\ref{eq:AppStep1L}) has a saddle point which satisfies \[\begin{array}{c}
\mathcal{L}(    \mathbf{r}_{vt}^*, \mathbf{\xi}_{vy}^*, \varphi_{vts}^*,\omega_{vut}^*,\alpha_{vts,k},\beta_{vut,k} ) \\ \leq \mathcal{L}(    \mathbf{r}_{vt}^*, \mathbf{\xi}_{vy}^*, \varphi_{vts}^*,\omega_{vut}^*,\alpha_{vts,k}^*,\beta_{vut,k}^* )\\ \leq \mathcal{L}(    \mathbf{r}_{vt}, \mathbf{\xi}_{vy}, \varphi_{vts},\omega_{vut},\alpha_{vts,k}^*,\beta_{vut,k}^* ).
\end{array}\]
Thus, iterations in Proposition 1 converge to the solution of Problem (\ref{eq:DistributedTransferMatrix}) based on Section 3.2 and Appendix A in \cite{boyd2011distributed}.

\iffalse
\begin{lemma}
\label{lemmaApp3}
With $\alpha_{vt}^{(0)}=\mathbf{0}_{(p+1)\times 1}$ and $\beta_{vt}^{(0)}=\mathbf{0}_{(2p+2)\times 1}$, Problem (\ref{eq:DistributedTransferMatrix}) can be solved by the following iterations:
\begin{equation}
\label{eq:AppStep3Soli1}
\begin{array}{l}
\lambda_{vt}^{(k+1)} \in\arg\max\limits_{\mathbf{0}_{vt} \preceq \lambda_{vt} \preceq  VTC \mathbf{1}_{vt} }-\frac{1}{2}\lambda_{vt}^T\mathbf{Y}_{vt}\mathbf{X}_{vt}[\mathbf{I,I}]\mathbf{U}_{vt}^{-1}[\mathbf{I,I}]^T\mathbf{X}_{vt}^T\mathbf{Y}_{vt}\lambda_{vt} \\ \ \ \ \ \ \ \ \ \ \ \ \ \ \ \ \ + (\mathbf{1}_{vt}+\mathbf{Y}_{vt}\mathbf{X}_{vt}[\mathbf{I,I}]\mathbf{U}_{vt}^{-1}\mathbf{f}_{vt}^{(k)})^T\lambda_{vt}.
\end{array}
\end{equation}
\begin{equation}
\label{eq:AppStep3Soli2}
\begin{array}{c}
\mathbf{r}_{vt}^{(k+1)}=\mathbf{U}_{vt}^{-1} \bigg( [\mathbf{I,I}]^T\mathbf{X}_{vt}^T\mathbf{Y}_{vt}\lambda_{vt}^{(k+1)} - \mathbf{f}_{vt}^{(k)} \bigg).
\end{array}
\end{equation}
\begin{equation}
\label{eq:AppStep3Soli3}
\alpha_{vt}^{(k+1)} = \alpha_{vt}^{(k)} + \frac{\eta_1}{2}[\mathbf{I,0}]\sum\limits_{s\in\mathcal{T},s\neq t}(\mathbf{r}_{vt}^{(k+1)}-\mathbf{r}_{vs}^{(k+1)}).
\end{equation}
\begin{equation}
\label{eq:AppStep3Soli4}
\beta_{vt}^{(k+1)} = \beta_{vt}^{(k)} + \frac{\eta_2}{2}\sum\limits_{u\in\mathcal{B}_v}(\mathbf{r}_{vt}^{(k+1)}-\mathbf{r}_{ut}^{(k+1)}).
\end{equation}
\end{lemma}
\fi

%\vspace{-3mm}
\bibliographystyle{ieeetr}
\bibliography{DistributedTransferZhang.bib}

\end{document}